\documentclass[conference,letterpaper]{IEEEtran}
\IEEEoverridecommandlockouts
\usepackage{amsmath,amssymb,bm}
\usepackage{graphicx,booktabs,multirow,tabularx}
\usepackage{xcolor,xspace,amsthm,float}
\usepackage{algorithm,algpseudocode}
\usepackage[inline]{enumitem}
\usepackage{cite,url}
\usepackage[hidelinks]{hyperref}
\newtheorem{proposition}{Proposition}

\newcommand{\method}{{\upshape\scshape Spatial Grafting}\xspace}

\definecolor{DeltaPos}{RGB}{0,119,51}
\definecolor{DeltaNeg}{RGB}{187,34,34}
\newcommand{\dpos}[1]{\raisebox{0.5ex}{\tiny\textcolor{DeltaPos}{$+#1$}}}
\newcommand{\dneg}[1]{\raisebox{0.5ex}{\tiny\textcolor{DeltaNeg}{$-#1$}}}
\newlength{\gw}
\newcolumntype{B}{>{\raggedleft\arraybackslash}p{\gw}}
\newcolumntype{G}{>{\raggedright\arraybackslash}p{\gw}}
\newlength{\gwtmp}
\newcommand{\setgw}[2]{\settowidth{\gw}{#1}\settowidth{\gwtmp}{#2}%
  \ifdim\gwtmp>\gw\setlength{\gw}{\gwtmp}\fi}

\title{Spatial Grafting: Grounding 3D Features for Flow-Matching Robot Policies}
\author{\IEEEauthorblockN{Dingsheng Liu\IEEEauthorrefmark{1}\IEEEauthorrefmark{4}, Yangzheng Wu\IEEEauthorrefmark{2},
Mahboubeh Asadi\IEEEauthorrefmark{2}, Zhiyuan Li\IEEEauthorrefmark{1}\IEEEauthorrefmark{4}\\
Jinbang Huang\IEEEauthorrefmark{2}, Yixin Xiao\IEEEauthorrefmark{2},
Tongtong Cao\IEEEauthorrefmark{3}, Yingxue Zhang\IEEEauthorrefmark{2}}
\IEEEauthorblockA{\IEEEauthorrefmark{1}University of Toronto\\
\IEEEauthorrefmark{2}Huawei Noah's Ark Lab\\
\IEEEauthorrefmark{3}Department of Foundation Model, 2012 Labs}
\thanks{\IEEEauthorrefmark{4}Work done during the internship at Huawei Noah's Ark Lab.}}
\hypersetup{pdftitle={Spatial Grafting: Grounding 3D Features for Flow-Matching Robot Policies},pdfauthor={Dingsheng Liu, Yangzheng Wu, Mahboubeh Asadi, Zhiyuan Li, Jinbang Huang, Yixin Xiao, Tongtong Cao, Yingxue Zhang},pdfsubject={Robotic manipulation with metric-grounded spatial features}}

\begin{document}
\maketitle
\begin{abstract}
Pretrained robot manipulation policies such as vision-language-action models (VLAs) or world-action models (WAMs) leave interaction-relevant metric geometry implicit. Recent breakthroughs in spatial reconstruction can supply the necessary geometry reliably, but their features describe local shape without stating where it lies with respect to the robot. How best to deliver these features to a pretrained policy remains unresolved. We propose \textbf{\method}, a versatile, lightweight spatial module that binds frozen reconstruction features to metric, robot-relative geometry. \method constructs metric-grounded spatial tokens and injects them into the flow-matching action expert through cross-attention, without modifying the host's perceptual pathway, so the host retains the full benefit of its pretraining. We evaluate it more broadly than any geometry-aware
policy we compare against: one graft architecture, with no per-host redesign,
on two VLAs and two WAMs, across four simulation benchmarks that span short-horizon manipulation, visual robustness, clutter and long-horizon mobile manipulation, and on three real-robot platforms with single- and dual-arm configurations. On RoboTwin 2.0, a dual-arm manipulation benchmark, the graft improves every host across VLAs and WAMs. Grafted $\pi_{0.5}$ gains $11.3\%$ and $15.6\%$ on clean and randomized scenes, reaching $94.0\%$ and $92.4\%$, above the strongest published 3D-conditioned policy, WAM4D ($93.8\%$ and $89.9\%$). The margin widens as the horizon lengthens: on tasks from BEHAVIOR-1K, a dual-arm mobile manipulation challenge scored by average task progress, it surpasses the 2025 challenge winner on five of six tasks, by up to $0.47$ Q-score, and exceeds a map-conditioned spatial policy on average across the three tasks both report.
\end{abstract}

\section{Introduction}

Coupling a pretrained backbone with a flow-matching action expert has become
the dominant recipe for robot manipulation, with vision-language-action
models (VLAs) building the expert on a vision-language model (VLM) and
world-action models (WAMs) on a video world model
\cite{black2025pi0,yuan2026fastwam}. Neither backbone was designed for
manipulation or for the 3D structure it depends on: both read a scene as
pixels, with no notion of metric scale or of where a surface lies relative to
the gripper, so the policy selects the right object and the right sequence of
motions yet misplaces the grasp itself. What is missing is not coarse
behaviour but contact-level precision.
The problem this paper addresses is how to supply a flow-matching
policy with manipulation-relevant spatial information it can act on.

Spatial reconstruction foundation models such as Depth Anything~3 (DA3)
\cite{lin2026depthanything3} and VGGT-$\Omega$ \cite{wang2026vggtomega} are
the natural source. They recover dense metric structure from ordinary images,
and because they are learned networks rather than geometric pipelines, a
policy can read their latent features directly. A growing body of spatial
action models already adds 3D information to policies, but three gaps remain.
First, most are validated on one or two benchmarks and a single host, so it
is difficult to attribute a gain to the geometry rather than to the
surrounding design. Second, many depend on inputs a deployed robot does not
have: privileged instance labels or object state from a simulator, dedicated
depth or point-cloud sensors, or a scene map maintained across the episode.
Third, each is built for one policy family, often as a new 3D architecture
trained from scratch, and does not transfer to the strongest pretrained
hosts. The open question is therefore not whether geometry helps, but how to effectively
deliver it to flow-matching action policies.

\begin{figure*}[t]
    \centering
    \includegraphics[width=\textwidth]{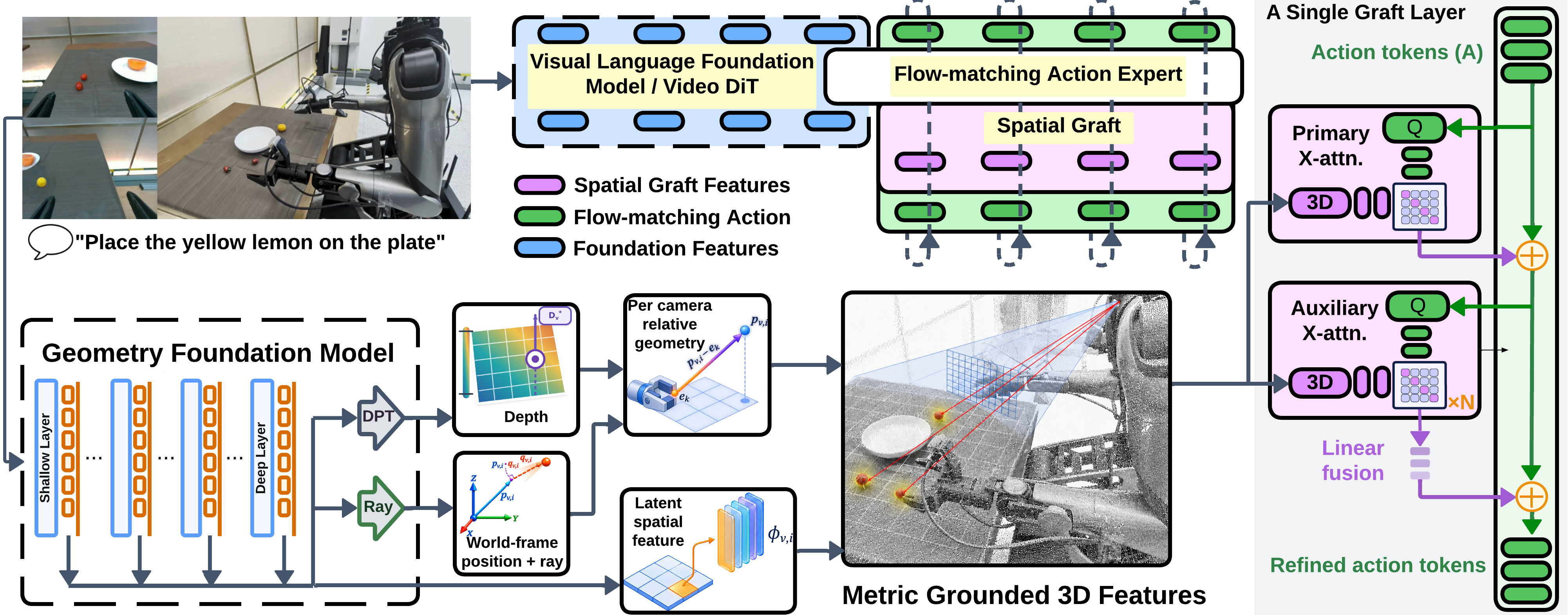}
    \caption{\textbf{Overview of \method.} A frozen geometry foundation model
    extracts multi-level features from each RGB view. Depth, calibrated rays,
    and end-effector anchors bind those features to metric, robot-relative 3D
    locations, forming a spatial bank per view. The magnified graft layer acts
    only on the flow-matching action expert: action tokens first cross-attend
    to the primary-view bank; the updated states then query the auxiliary-view
    banks. A single bias-free linear projection fuses the auxiliary attention
    outputs into the action update. This layer repeats in selected late action
    blocks, leaving the host's vision-language or video pathway ungrafted.}
    \label{fig:pipeline}
\end{figure*}

\method is that interface (Figure~\ref{fig:pipeline}). Rather than
converting reconstruction output into an explicit representation, we ground
the reconstruction model's latent space itself: each frozen latent is bound,
at its own grid location, to its metric position in the workspace and to its
offset from every end effector, and the bound tokens are grafted by
cross-attention into the final blocks of the host's action expert. The graft
needs RGB, camera calibration and a depth map, measured or predicted by the
same frozen model; it uses no segmentation, no object state, and keeps no map
or state between observations. The host's perception pathway is untouched,
the added parameters are 4--5\% of a 3--5B host, and the graft is routed so that
it cannot help without reading the geometry. Because the interface asks only
for a flow-matching action expert, one construction serves both VLAs and
WAMs. We evaluate it more broadly than any spatial action model we compare
against (Table~\ref{tab:spatial_vla_comparison}): four hosts, two
VLAs and two WAMs; four simulation benchmarks that each isolate a different
demand---short-horizon manipulation on LIBERO~\cite{liu2023libero}, visual
robustness on RoboTwin~2.0~\cite{chen2025robotwin2}, clutter and collision
on RoboPRO~\cite{li2026robopro}, and long-horizon mobile manipulation on
BEHAVIOR-1K (B1K)~\cite{behaviorchallenge2025}; and three real robots,
single-arm, bimanual and humanoid. Our main contributions are threefold:
\begin{itemize}
\item \textbf{Robot-grounded reconstruction latents.}
Prior work has used reconstruction latents or metric geometric grounding separately. We bind each latent to metric and end-effector-relative coordinates, turning a frozen reconstruction space into a robot-grounded representation for direct action prediction. Bank-off and bank-shuffle ablations show that the gain comes from grounded content rather than added capacity (Section~\ref{sec:ablations}).
\item \textbf{A grafting interface for any flow-matching action expert.}
The graft uses bank-only cross-attention in the final action-expert blocks, which are revisited at every denoising step, and requires no other host modification. With bias-free projections, zero banks induce zero change (Proposition~\ref{prop:bank-only}). The same interface improves two VLAs and two WAMs from public checkpoints and remains effective across reconstruction backbones (Sections~\ref{sec:results} and~\ref{sec:ablations}).
\item \textbf{Broad evaluation across benchmarks and embodiments.}
Across four hosts, four benchmarks, and three real robots, we use only RGB, calibration, and measured or predicted depth, including simulator-provided depth in RoboTwin. Grafted $\pi_{0.5}$ outperforms the strongest published 3D-conditioned policy on RoboTwin and improves the B1K challenge winner's performance on five of six tasks; across both RoboPRO hosts, grafting improves all eight host--condition entries (Sections~\ref{sec:results} and~\ref{sec:complex-settings}, Tables~\ref{tab:libero-robotwin}--\ref{tab:real-robot}).
\end{itemize}

\section{Related Work}
\label{sec:related}

\paragraph{Action models and reconstruction models}
Modern manipulation policies attach a continuous action expert, trained by
diffusion or flow matching \cite{chi2023diffusion,black2025pi0,liu2025rdt},
to a pretrained backbone: a VLM in VLAs
\cite{octo2024,black2025pi05,zheng2026xvla} or a video world model in WAMs
\cite{yuan2026fastwam,li2026lingbotva}. Both backbones perceive in 2D, so the
expert receives neither metric scale nor the position of a surface relative
to the gripper, and the residual failures concentrate at contact.
Reconstruction foundation models are the complementary resource: they regress
dense metric depth and pointmaps from ordinary images
\cite{ranftl2022midas,yang2024depthanything,yang2024depthanythingv2,wang2024dust3r,leroy2024mast3r,wang2026vggtomega,lin2026depthanything3},
and their latent features encode structure learned far beyond any robot
dataset.

\paragraph{Spatial action models}
Prior work has supplied geometry to policies as explicit signals: point
clouds through a branch into the action module
\cite{li2026pointvla,sun2025geovla}, estimated-depth position encodings on
the VLM's tokens \cite{qu2025spatialvla}, depth tokens
\cite{yuan2026depthvla,lee2026molmoact}, and persistent scene maps
\cite{kim2026serf}; earlier 3D policies attended to lifted features relative
to the gripper but trained from scratch \cite{gervet2023act3d,ke20243ddiffuser}.
These designs pay for their geometry with an RGB-D or point-cloud sensor, with
privileged simulator labels and a prebuilt map, or with a bespoke
architecture, and each serves one policy family. A second line uses
reconstruction \emph{latents} instead, as a training-time alignment
target \cite{li2026spatialforcing,guo2025glad,li2026wam4d,zhang2026mecowam},
so that no geometry is present at test time and the gain is bounded by what
the host internalizes. Alternatively, the latents are fused into the VLM's
visual tokens
\cite{yu2026threedmix,yang2026impactgfm}, where the latent arrives in the
camera's frame at an unknown scale and is compressed by the trunk before the
action expert sees it. Both lines report at most three evaluation settings
on a single host (Table~\ref{tab:spatial_vla_comparison}), and the
two studies that compare injection routes find that where geometry enters
decides the outcome \cite{yu2026threedmix,yang2026impactgfm}.

\paragraph{The missing interface}
Taken together, prior work has treated \emph{where geometry comes from}
(a sensor or a reconstruction model) and \emph{where it enters} (the input,
the loss or the action module) as separate questions, and has not delivered
reconstruction latents to a pretrained action expert in the frame a
controller needs. A latent describes the scene relatively, in the camera's
frame and up to scale; a policy must learn a correlation between what it
observes and where, in absolute metric units and relative to its end
effectors, it should act. \method supplies that interface. It binds each
frozen latent, at its own grid location, to its absolute metric position and
to its offset from every end effector, so no point cloud, map or privileged
label is needed; it delivers the bound tokens through a bank-only route into
the final blocks of the action expert, so added capacity cannot help without
the geometry; and it asks the host for nothing but a flow-matching expert, so
one construction serves both VLAs and WAMs. We compare against six of the methods
above: SERF on B1K, and SpatialVLA, GeoVLA, Spatial Forcing, GLaD and WAM4D
on LIBERO and RoboTwin.

\subsection{Detailed related-work comparison}
\label{app:related-comparison}

\paragraph{Geometry signals and host policies}
Table~\ref{tab:spatial_vla_comparison} expands the comparison in
Section~\ref{sec:related} by separating the pretrained hosts from methods
that explicitly supply geometric information. The host rows describe the
policies to which we attach the graft---$\pi_{0.5}$ \cite{black2025pi05},
X-VLA \cite{zheng2026xvla}, Fast-WAM \cite{yuan2026fastwam}, and
LingBot-VA \cite{li2026lingbotva}---not competing 3D modules. Among the
spatial action models, SpatialVLA \cite{qu2025spatialvla} and GeoVLA
\cite{sun2025geovla} use depth-derived positions or
point clouds, while Spatial Forcing \cite{li2026spatialforcing} and GLaD
\cite{guo2025glad} distill frozen reconstruction
features during training. SERF \cite{kim2026serf} instead maintains a persistent map of the
scene and robot. These are different sources of spatial information:
explicit coordinates locate observations, distilled features provide a
learned geometric prior, and persistent maps retain information beyond the
current view. The ``training only'' annotation distinguishes supervision
that shapes the deployed policy from an additional geometric input supplied
at inference.

\paragraph{Where geometry enters}
The route column distinguishes methods even when they use similar signals.
SpatialVLA modifies visual-token positions, 3D-Mix \cite{yu2026threedmix} fuses reconstruction and
semantic features, and Spatial Forcing and GLaD align intermediate
representations. GeoVLA and PointVLA \cite{li2026pointvla} deliver point-cloud information to an
action expert. GeoPredict \cite{qian2026geopredict} uses auxiliary geometric
prediction, while DyWA \cite{lyu2025dywa}, GWM \cite{lu2025gwm},
WAM4D \cite{li2026wam4d}, X-WAM \cite{guo2026xwam}, and MECo-WAM
\cite{zhang2026mecowam} use geometric prediction or reconstruction to shape
action-relevant representations or world-model rollouts.
Our distinction is therefore not simply using 3D features or adding an
action-side module. We bind frozen features to metric, end-effector-relative
geometry and route the added residual through spatial banks in the late
action layers. This separates the source of the features from their
robot-relative grounding and from the mechanism by which they modify an
action. The bank interventions in Section~\ref{sec:ablations} test dependence
on that information, rather than attributing every benefit of an added
module to geometry.

\paragraph{Evaluation coverage and comparison limits}
The final column records the settings reported for each method, not a
common-protocol leaderboard. In particular, RoboTwin~1.0 is distinct from
RoboTwin~2.0, and a real-robot evaluation can involve different embodiments,
tasks, and trial counts. BEHAVIOR-1K coverage is also task-specific: SERF
\cite{kim2026serf}
reports three tasks, while our results report six individual tasks rather
than a benchmark-wide aggregate. The table highlights that we examine the
same grafting interface across tabletop simulation, distribution shift,
long-horizon mobile manipulation, and physical robots, using both VLA and
WAM hosts. It does not imply that every host was tested in every setting or
that broader coverage establishes higher performance. Quantitative
comparisons and their provenance are given in
Section~\ref{sec:experiments} and Section~\ref{app:eval-protocol}; the
controlled comparisons remain each host with and without its graft.

\section{Method}
\label{sec:method}

\subsection{Problem formulation}
\label{sec:problem}

A pretrained policy $\pi_\theta(A\mid I,\mathcal{T},S)$ maps $N$ RGB views
$I=\{I_v\}_{v=1}^{N}$, an instruction $\mathcal{T}$ and proprioceptive state
$S$ to an action chunk $A$. We consider hosts whose action expert is a
transformer trained by flow matching
\cite{lipman2023flow,black2025pi0,yuan2026fastwam} and re-run once per
integration step at inference, whether it reads a VLM trunk or a video
stream. Beyond the host's inputs, the graft assumes per-view calibration
$\mathcal{C}=\{(K_v,T_v)\}_{v=1}^{N}$ (intrinsics and camera-to-world pose
$T_v=(R_v,t_v)$ in one common frame), a metric depth map $D_v^{\star}$
measured by the sensor or predicted by the same frozen reconstruction model,
and at least one wrist camera per arm; no other input is assumed: no
segmentation, object state or map. From these \method builds one bank per
view, $B_v=B_v(I_v,\mathcal{C},D_v^{\star})$, and adds graft parameters
$\psi$, leaving the host intact:
$A\sim\pi_{\theta,\psi}(A\mid I,\mathcal{T},S,\{B_v\}_{v=1}^{N})$.

\subsection{Grounding the latent space: the spatial bank}
\label{sec:bank}

Binding happens at the grid: one bank token per feature-grid location, so the
correspondence between a latent and its metric point is never broken. The
result is a grounded latent space rather than a geometric representation: no
point is ever materialized as a cloud or map, and the tokens are rebuilt from
the current observation alone. For each view $v$ we tap frozen backbone
layers $\mathcal{L}_G$ on a common grid of $P$ locations. Taps differ in scale
by orders of magnitude, so each is normalized, projected, rescaled by a
learned gain and tagged with a layer embedding before fusion to
$\phi_{v,i}\in\mathbb{R}^{d}$, for $d$ the action-expert width.

\paragraph{Metric and robot-relative geometry} Back-projecting grid location
$i$ through $(K_v,T_v)$ and $D_v^{\star}$ yields a point $p_{v,i}$ and viewing
direction $q_{v,i}$ in the common calibration frame. Coordinates are
normalized as $\hat p=\operatorname{clip}((p-c)/\lambda,-1,1)$ about a
workspace centre $c$ with a single isotropic scale $\lambda$ per embodiment,
so that a $10$~cm displacement is encoded identically along every axis. With $e_1,\dots,e_E$ the
end-effector anchors, taken as the optical centres $e_k=t_k$ of the wrist
cameras (a fixed rigid offset from the gripper, so no kinematic model is
needed), each location carries
\begin{equation}
u_{v,i}
=
\left[
\gamma_B(\hat p_{v,i});
\left\{\gamma_B(\widehat{p_{v,i}-e_k})\right\}_{k=1}^{E};
q_{v,i};
m_{v,i}
\right],
\label{eq:spatial-vector}
\end{equation}
with $\gamma_B$ a Fourier encoding \cite{tancik2020fourier} and $m_{v,i}$
flagging valid depth. The absolute coordinate places the feature in the
workspace; the relative ones expose reaching and contact, and change as the
arms move even when the scene is static.

\paragraph{Binding} $\phi_{v,i}$ and $u_{v,i}$ describe the same location
and become one token: $u_{v,i}$ modulates $\phi_{v,i}$ through FiLM
\cite{perez2018film}, and the two are concatenated and projected into the
bank token $z_{v,i}\in\mathbb{R}^{d}$ (Section~\ref{app:method-details}), so
geometry enters both directly and by modulation.

\subsection{A bank-only graft for any flow-matching action expert}
\label{sec:injection}

Geometry enters only the final $M$ blocks of the action expert, whose states
already encode much of the base trajectory; the perceptual pathway receives
no graft. Flow matching motivates this placement: the sampler re-enters these
blocks at every integration step on a progressively cleaner chunk, so the
banks are read over exactly the interval in which a coarse action becomes
precise. $M$ spans the last quarter to third of the stack; grafting every
block is worse (Section~\ref{sec:ablations}). The
interface touches the host at exactly two points, the action-token states
$H^{(l)}$ and the residual update at block $l$; a new host changes only the
expert width $d$, the number of grafted blocks $M$ and the residual scale
$\rho$, plus a linear bridge when its blocks are wider than the injection
(Table~\ref{tab:param-cost}).

Each view $v$ has an independent multi-head cross-attention with queries $Q$
from the action states and keys and values from its bank $B$:
\begin{equation}
\begin{aligned}
\operatorname{CA}^{(l)}_v(Q,B)
&=W_{O}^{(l,v)}\operatorname{Attn}\Bigl(
\\[-2pt]
&\quad\nu\!\left(W_{Q}^{(l,v)}\operatorname{LN}(Q)\right),
\\[-2pt]
&\quad\nu\!\left(W_{K}^{(l,v)}\mathcal{N}(B)\right),\;
W_{V}^{(l,v)}\mathcal{N}(B)\Bigr),
\end{aligned}
\label{eq:crossattn}
\end{equation}
where $\mathcal{N}$ normalizes the key/value stream, $\nu$ normalizes queries
and keys per head, and the logits carry a clamped learned gain. The \emph{primary} view $v=1$, typically the global scene camera, writes
into the action stream first, with residual scale $\rho$:
$\bar H^{(l)}=H^{(l)}+\rho\operatorname{CA}^{(l)}_1(H^{(l)},B_1)$. Each
\emph{auxiliary} view then queries its own bank from this updated state
through a learned projection,
$A_v^{(l)}=\operatorname{CA}^{(l)}_v(P_v^{(l)}\bar H^{(l)},B_v)$. A single
bias-free linear projection $W_{\mathrm{merge}}^{(l)}$ fuses the concatenated
auxiliary outputs $[A_2^{(l)};\dots;A_N^{(l)}]$ back into the stream, and the
result $\tilde H^{(l)}$ replaces $H^{(l)}$ in each grafted block
(Section~\ref{app:derivations}, Algorithm~\ref{alg:graft}).

\paragraph{Bank-only routing} The projections $P_v^{(l)}$ build queries
only, and the merge consumes the attention outputs $A_v^{(l)}$ rather than
the branch activations, which would compose two linear layers into a
bank-independent action-state map. Zero banks then imply a zero graft delta
(Proposition~\ref{prop:bank-only}); we verify it empirically with bank-off rather than assume it
(Sections~\ref{app:proof} and~\ref{app:verification}). The graft adds $147$M trainable parameters to $\pi_{0.5}$ and X-VLA, $214$M
to Fast-WAM and $277$M to LingBot-VA, which is $4.4\%$ of the $3.3$B
$\pi_{0.5}$ and $5.4\%$ of the $5.1$B LingBot-VA (Table~\ref{tab:param-cost}).

\subsection{Training objective}
\label{sec:training-objective}
Each grafted host keeps its own flow-matching objective and finetuning
recipe; the graft only adds the banks to the conditioning set of the velocity
field. With noise $\epsilon\sim\mathcal{N}(0,I)$, level $\sigma\in[0,1]$,
interpolant $A_\sigma=(1-\sigma)A+\sigma\epsilon$ and the host's weighting
$w(\sigma)$,
\begin{equation}
\begin{aligned}
\mathcal{L}(\theta,\psi)
&=\mathbb{E}_{A,\epsilon,\sigma}\,w(\sigma)\\
&\quad\bigl\|v_{\theta,\psi}\bigl(A_\sigma,\sigma\mid I,\mathcal{T},S,\{B_v\}\bigr)
-(\epsilon-A)\bigr\|_2^2,
\end{aligned}
\label{eq:graft-loss}
\end{equation}
where $v_{\theta,\psi}$ is the host's velocity head evaluated with
$H^{(l)}\!\leftarrow\!\tilde H^{(l)}$ in the final $M$ blocks; there is no
auxiliary geometric supervision (Section~\ref{app:training}). Three scheduling choices are not neutral and are ablated in
Section~\ref{sec:ablations}: the output projections start at small non-zero
values, the host is held fixed for a brief warmup, and the graft residual is
dropped jointly across blocks with probability $p_{\mathrm{drop}}$.

\section{Method details}
\label{app:method-details}

\subsection{Derivations and full grafting equations}
\label{app:derivations}

\paragraph{Back-projection} Let $\tilde{x}_{v,i}$ be the homogeneous image
coordinate of location $i$, with normalized camera ray
$r_{v,i}=K_v^{-1}\tilde{x}_{v,i}/\|K_v^{-1}\tilde{x}_{v,i}\|_2$. Treating
$D^\star_{v,i}$ as optical-axis depth, the world-frame point and viewing
direction used in Section~\ref{sec:bank} are
\begin{equation}
p_{v,i}
=
R_v
\left(
\frac{D^\star_{v,i}}{r_{v,i,z}}\,r_{v,i}
\right)
+t_v,
\qquad
q_{v,i}
=
R_vr_{v,i}.
\label{eq:world-point}
\end{equation}

\paragraph{Injection} Writing $H^{(l)}$ for the action-token states at a
grafted block $l$, $\operatorname{CA}^{(l)}_v$ for the per-view
cross-attention of Eq.~\eqref{eq:crossattn}, $P_v^{(l)}$ for the view-specific
query projections and $\rho>0$ for the shared residual scale, the two stages
inside the injection module are
\begin{align}
\bar H^{(l)}
&=
H^{(l)}
+
\rho\,\operatorname{CA}^{(l)}_1\!\left(H^{(l)},B_1\right),
\label{eq:primary-injection}\\
A_v^{(l)}
&=
\operatorname{CA}^{(l)}_v
\!\left(
P_v^{(l)}\bar H^{(l)},\,B_v
\right),
\qquad
v=2,\ldots,N,
\label{eq:aux-attention}\\
\tilde H^{(l)}
&=
\bar H^{(l)}
+
\rho\,
W_{\mathrm{merge}}^{(l)}
\operatorname{Concat}_{v=2}^{N}
A_v^{(l)},
\label{eq:multi-view-injection}
\end{align}

The merge is a single bias-free linear projection of the attention outputs
$A_v^{(l)}$, not the branch activations $P_v^{(l)}\bar H^{(l)}$; with two
auxiliary wrist views its input has width $2d$, and with one it has width
$d$. The module returns
$\Delta^{(l)}=\tilde H^{(l)}-H^{(l)}$, and the caller applies
$H^{(l)\prime}=H^{(l)}+g_l\kappa\Delta^{(l)}$. Here $g_l$ selects the final
$M$ blocks and whole-graft dropout draws one
$\kappa\sim\mathrm{Bernoulli}(1-p_{\mathrm{drop}})$ per sample per step,
shared across those blocks; kept samples are not rescaled.

\paragraph{Parameter cost} Each grafted block holds one primary
cross-attention, $N-1$ query projections, $N-1$ auxiliary cross-attentions and
one merge, and the bank builder is shared by all of them, so
\begin{equation}
|\psi|
\;=\;
\underbrace{M\,(6N-2)\,d^{2}}_{\text{grafted blocks}}
\;+\;
\underbrace{|\psi_{\mathrm{bank}}|}_{\text{bank builder}}
\;+\;
|\psi_{\mathrm{bridge}}|,
\label{eq:param-count}
\end{equation}
with $d$ the injection width. With $N=3$ views and $d=1024$, each grafted
block costs $16d^{2}=16.8$M, so the injection grows linearly in the number of
grafted blocks $M$ and quadratically in $d$. The bank builder is $46.0$M for
every host, and $|\psi_{\mathrm{bridge}}|$ is non-zero only when the host's
blocks are wider than the injection width. Table~\ref{tab:param-cost} gives
the measured counts.

\begin{table*}[t]
\centering
\small
\caption{\textbf{Trainable parameters added by the graft on RoboTwin.} Counts
are measured by building the modules. The share is relative to the host's own
parameter count.}
\label{tab:param-cost}
\begin{tabular}{@{}lccccc@{}}
\toprule
Host & Grafted blocks & Bank builder & Injection & Bridges & Total (share) \\
\midrule
$\pi_{0.5}$ & 6 & 46.0M & 100.7M & -- & 146.8M ($4.4\%$ of 3.3B) \\
X-VLA & 6 & 46.0M & 100.7M & -- & 146.8M \\
$\pi_{0.5}$, VGGT-$\Omega$ & 6 & 48.1M & 100.7M & -- & 148.9M \\
Fast-WAM & 10 & 46.0M & 168.0M & -- & 214.0M \\
LingBot-VA & 10 & 46.0M & 168.0M & 62.9M & 276.9M ($5.4\%$ of 5.09B) \\
\bottomrule
\end{tabular}
\end{table*}

\begin{algorithm*}[t]
\caption{\method: bank construction and grafting at one denoising step.
The exact-zero claim of Proposition~\ref{prop:bank-only} is conditional on
bias-free bank-dependent paths.}
\label{alg:graft}
\begin{algorithmic}[1]
\Require RGB images $\{I_v\}_{v=1}^{N}$, calibration $\{(K_v,T_v)\}$, metric depth
$\{D_v^\star\}$, end-effector anchors $\{e_k\}_{k=1}^{E}$, host action states
$H^{(l)}$, frozen spatial model $G$, grafted-block count $M$
\Statex \textbf{Bank construction} \Comment{once per observation}
\For{$v=1,\ldots,N$}
  \State $\{F_v^{(l)}\}_{l\in\mathcal{L}_G}\gets G(I_v)$
         \Comment{frozen multi-layer taps}
  \State $\phi_v\gets W_{\mathrm{fuse}}\operatorname{Concat}_l
         (\operatorname{LN}(P_l(F_v^{(l)}))+\eta_l)$
  \State $p_v,q_v\gets$ back-project $D_v^\star$ through $(K_v,T_v)$
         \Comment{Eq.~\eqref{eq:world-point}}
  \State $u_v\gets[\gamma_B(\hat p_v);\{\gamma_B(\widehat{p_v-e_k})\}_{k};q_v;m_v]$
         \Comment{Eq.~\eqref{eq:spatial-vector}}
  \State $s_v\gets\mathrm{MLP}(u_v)$;\quad
         $f'_v\gets(\mathbf{1}+\Gamma(s_v))\odot\phi_v+\beta(s_v)$
         \Comment{FiLM binding}
  \State $z_v\gets\mathrm{LN}(W_{\mathrm{out}}[s_v;f'_v])+\text{grid and view
         embeddings}$
  \State $B_v\gets z_v-\operatorname{mean}_{\text{tokens}}(z_v)$
         \Comment{per-sample centering}
\EndFor
\Statex \textbf{Injection into the final $M$ blocks}
\State draw $\kappa\sim\mathrm{Bernoulli}(1-p_{\mathrm{drop}})$
       \Comment{one draw shared by every block}
\For{$l=L-M+1,\ldots,L$}
  \State $\bar H\gets H^{(l)}+\rho\,\operatorname{CA}^{(l)}_1(H^{(l)},B_1)$
         \Comment{primary view}
  \State $A_v\gets\operatorname{CA}^{(l)}_v(P_v^{(l)}\bar H,B_v)$ for
         $v=2,\ldots,N$ \Comment{$P_v^{(l)}$ builds queries only}
  \State $\tilde H\gets\bar H+\rho\,W_{\mathrm{merge}}^{(l)}
         \operatorname{Concat}_{v\ge2}A_v$
         \Comment{merges attention \emph{outputs}}
  \State $H^{(l)\prime}\gets H^{(l)}+\kappa\,(\tilde H-H^{(l)})$
\EndFor
\end{algorithmic}
\end{algorithm*}

\paragraph{Frame parameters} The workspace centre $c$ and isotropic
scale $\lambda$ define the frame the graft reasons in, so they are fitted once
per embodiment on the training split and stored with the checkpoint. The
hand-relative encodings use their own fitted centres, distinct from the
workspace centre. Where the end-effector anchors are taken as wrist-camera
optical centres, each differs from the true end-effector position by a fixed
rigid offset, which the geometry encoder absorbs.

\subsection{The bank-only invariant}
\label{app:proof}

\begin{proposition}[Bank-only invariant]
\label{prop:bank-only}
If every projection on a bank-dependent path
($W_K$, $W_V$, $W_O$, $W_{\mathrm{merge}}$) is bias-free and the key/value
normalization $\mathcal{N}$ is scale-only, then $B_1=\cdots=B_N=0$ implies
$H^{(l)\prime}=H^{(l)}$ at every grafted block $l$.
\end{proposition}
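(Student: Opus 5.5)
The plan is to trace a zero bank through one grafted block and show that every bank-dependent quantity vanishes, working forward in the order of Eqs.~\eqref{eq:primary-injection}--\eqref{eq:multi-view-injection}.

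\textbf{Step 1: the value stream is zero.} Since $\mathcal{N}$ is scale-only, $\mathcal{N}(0)=0$. There is no additive shift, and any division by a norm or variance is guarded by the usual $\epsilon$, so $0/\sqrt{0+\epsilon}=0$. Because $W_V^{(l,v)}$ is bias-free, $W_V^{(l,v)}\mathcal{N}(B_v)=0$ for every view $v$.

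\textbf{Step 2: attention of zero values is zero.} The keys $\nu(W_K^{(l,v)}\mathcal{N}(B_v))$ need not be controlled at all. Whatever the logits are, including the clamped learned gain and whether or not $\nu$ adds a shift, the softmax weights form a row-stochastic matrix $\alpha$. Hence $\operatorname{Attn}(\cdot,\cdot,0)=\alpha\cdot 0=0$. With $W_O^{(l,v)}$ bias-free, $\operatorname{CA}_v^{(l)}(Q,0)=0$ for every query $Q$.

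The one point that needs care is that the query path, $\operatorname{LN}$, $W_Q$, $\nu$ and $P_v^{(l)}$, may carry biases and affine parameters. I would state explicitly that this is harmless: the output of $\operatorname{CA}^{(l)}_v$ is a convex combination of value vectors followed by a linear map. That is why the hypothesis restricts bias-freeness to the bank-dependent path ($W_K$, $W_V$, $W_O$, $W_{\mathrm{merge}}$) and not to every projection. Strictly, only $W_V$ and $W_O$ need to be bias-free for this step; the key path is irrelevant.

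\textbf{Step 3: propagate through the block.} Eq.~\eqref{eq:primary-injection} gives $\bar H^{(l)}=H^{(l)}$. Then Eq.~\eqref{eq:aux-attention} gives $A_v^{(l)}=\operatorname{CA}^{(l)}_v(P_v^{(l)}H^{(l)},0)=0$ for $v\ge2$. The concatenation is zero, and $W_{\mathrm{merge}}^{(l)}$ is bias-free and acts on the attention outputs rather than on $P_v^{(l)}\bar H^{(l)}$. Eq.~\eqref{eq:multi-view-injection} therefore yields $\tilde H^{(l)}=H^{(l)}$, so $\Delta^{(l)}=0$ and $H^{(l)\prime}=H^{(l)}+g_l\kappa\cdot0=H^{(l)}$ for either draw of $\kappa$.

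\textbf{Step 4: induction over blocks.} The same argument applies at each grafted block regardless of its incoming state, since Step 2 holds for arbitrary queries. The states entering later blocks are exactly the host's, so the claim holds at every grafted $l$.

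The main obstacle is Step 1, namely pinning down what ``scale-only'' means at the exact point $B=0$. A normalization that divides by an unregularized norm would give $0/0$, so I would make the $\epsilon$-regularization explicit, or define $\mathcal{N}(0)=0$ by convention.

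Finally, I would note that the hypothesis $B_v=0$ concerns the bank as it is fed to the cross-attention. The construction in Algorithm~\ref{alg:graft} does not itself produce zero from zero inputs, because of the embeddings and centering. This is why the proposition is stated at the bank interface and is checked empirically by bank-off.
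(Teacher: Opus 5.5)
Your proof is correct and follows the paper's own argument: the scale-only $\mathcal{N}$ maps a zero bank to zero values, and attention returns a convex combination of those values for any query. The bias-free $W_O$ and $W_{\mathrm{merge}}$ then give $\bar H^{(l)}=H^{(l)}$ and $\tilde H^{(l)}=H^{(l)}$, so $\Delta^{(l)}=0$ for either draw of $\kappa$; your remark that $W_K$ need not be bias-free is a valid sharpening, but ``whatever the logits are'' still needs $\nu$ to be finite on zero keys (the same $\epsilon$ point you raise for $\mathcal{N}$).
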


\begin{proof}
Scale-only normalization gives $\mathcal{N}(0)=0$, so $B_v=0$ yields zero keys
and values. The attention output is a convex combination of zero value vectors,
hence $0$ for any query, and the bias-free output projection maps it to $0$.
Thus $\bar H^{(l)}=H^{(l)}$, and $A_v^{(l)}=0$ for $v\ge2$ regardless of
$P_v^{(l)}\bar H^{(l)}$, so the bias-free merge projection maps the
concatenation of zeros to zero and $H^{(l)\prime}=H^{(l)}$.
\end{proof}

\paragraph{Implementation status} The $\pi_{0.5}$ implementation uses the
attention-output merge, eliminating the direct linear path from an auxiliary
query projection to the action stream. Setting
\texttt{spatial\_no\_bias=True} removes biases from the auxiliary attention
output projections and the merge, but the primary attention output projection,
key/value projections and key/value LayerNorm still have trainable biases.
Their biases initialize at zero, so a zero bank gives a zero delta at
initialization; Proposition~\ref{prop:bank-only} is not guaranteed for the
trained checkpoint. A nonzero bias can emit a scene-independent constant even
with a zero bank. We therefore interpret bank-off as a measurement of the
trained model with its banks removed, not as an assumed-zero residual.

\subsection{Architectural verification}
\label{app:verification}

Proposition~\ref{prop:bank-only} establishes an exact-zero invariant only
under its stated bias and normalization conditions. Before training, we check
numerically that a zero spatial bank gives zero graft residual and a nonzero
bank gives a nonzero one. This verifies initialization, not that the invariant
continues to hold after the remaining biases have trained. For the Fast-WAM
instantiation, strict checkpoint loading verifies the frozen host snapshot and
every added bank-builder and grafting parameter. The evaluation preflight
additionally checks view ordering, RGB and depth preprocessing, calibrated
intrinsics and extrinsics, the 32-action prediction and 24-action execution
contract, and live construction of a nonzero bank from the current three-view
observation. Measured depth is retained at millimeter precision and aligned to
the $18\times24$ DA3 feature grid with nearest-neighbor resampling.

These checks separate initialization and wiring from downstream empirical
performance. Bank-off measures the trained graft with its banks removed;
bank-shuffle measures whether its gain depends on the banks' sample-specific
geometry (Section~\ref{sec:ablations}).

\section{Experiments}
\label{sec:experiments}

\paragraph{Hosts and comparison protocol} We apply \method to four hosts chosen to span
both families of flow-matching policy: two VLAs, $\pi_{0.5}$ and X-VLA, and
two WAMs, Fast-WAM and LingBot-VA, and the graft is identical across every host. The reconstruction
backbone is DA3, kept frozen and tapped at four layers on a common feature grid, and
the spatial banks enter the final six action-expert blocks of the VLAs and
the final ten of the WAMs; the only host-specific settings are the ones
listed in Section~\ref{sec:injection}.

\paragraph{Benchmarks and baselines} We evaluate on four simulation benchmarks and three real robots, each under its standard protocol and each
chosen to stress a different demand. LIBERO (130 tasks, 50 seeds per task)
is near saturated and tests whether the graft leaves a working policy
intact. RoboTwin~2.0 (50 dual-arm tasks, 20 seeds per task in each of the
Clean and Randomized configurations) tests \method's manipulation capability and robustness to appearance changes. RoboPRO (80 tasks, 20 seeds per task in
clean and cluttered scenes, scored under both its Easy and Hard criteria)
adds clutter objects in the robot's path and its success metric penalizes collisions. B1K (six tasks, 20 public evaluation instances each) is the most difficult manipulation benchmark we test: long-horizon mobile manipulation that chains sub-skills together within a large household environment, and shows the breadth of \method's application. The real-robot trials (10 per task) test whether the gains survive measured depth and real
calibration error. Success rates (SR) are reported in \%, and every reported
change is an absolute difference. Beyond each host's own base, we compare
against the published 3D-conditioned policies listed in
Table~\ref{tab:libero-robotwin} and, on B1K, against SERF and the two
leading entries of the 2025 challenge. Host configurations, training data,
depth sources and the provenance of every quoted number are documented in
Sections~\ref{app:training} and~\ref{app:eval-protocol}, and per-task
results in Section~\ref{app:benchmark-details}.

We ask three questions, one per contribution: does one graft improve both
VLAs and WAMs (Section~\ref{sec:results}); do the gains hold under clutter,
long horizons and real sensing (Section~\ref{sec:complex-settings}); and do
they come from the grounded geometry and its placement rather than added
parameters (Section~\ref{sec:ablations})?

\subsection{One graft improves every host}
\label{sec:results}

\begin{table*}[t]
\centering
\footnotesize
\setlength{\tabcolsep}{1pt}
\renewcommand{\arraystretch}{1.02}
\caption{\textbf{LIBERO} \textbf{and RoboTwin SR (\%).}
Above the line: published 3D-conditioned policies, quoted from their own
papers, except Spatial Forcing on RoboTwin, which we trained ourselves
(Section~\ref{app:eval-protocol}). Below the line: our four hosts. LIBERO bases are quoted from each host's
published results; the $\pi_{0.5}$ entry is from the openpi release
\cite{openpi2025}, as its paper reports no LIBERO result. On RoboTwin, the $\pi_{0.5}$ base is
cited from \cite{li2026lingbotva} and the X-VLA base from \cite{bi2026motus},
both trained on the same RoboTwin demonstrations as our grafts, so the pairs
are directly comparable; the Fast-WAM and LingBot-VA bases are our
evaluations of their public checkpoints with our seeds and pipeline. Every
grafted entry is our run. Green/red: absolute change from the host's own
base. NR: not reported in the cited paper.
}
\label{tab:libero-robotwin}
\settowidth{\gw}{\textbf{94.0}\dpos{11.3}}
\begin{tabular*}{\linewidth}{@{\extracolsep{\fill}}l B@{\extracolsep{0pt}\,$\rightarrow$\,}G@{\extracolsep{\fill}} B@{\extracolsep{0pt}\,$\rightarrow$\,}G@{\extracolsep{\fill}} B@{\extracolsep{0pt}\,$\rightarrow$\,}G@{}}
\toprule
Model & \multicolumn{2}{c}{LIBERO Avg.} & \multicolumn{2}{c}{RoboTwin Clean}
& \multicolumn{2}{c}{RoboTwin Rand.} \\
\midrule
GLaD & \multicolumn{2}{c}{94.1} & \multicolumn{2}{c}{NR} & \multicolumn{2}{c}{NR} \\
GeoVLA & \multicolumn{2}{c}{97.7} & \multicolumn{2}{c}{NR} & \multicolumn{2}{c}{NR} \\
SpatialVLA & \multicolumn{2}{c}{78.1} & \multicolumn{2}{c}{NR} & \multicolumn{2}{c}{NR} \\
Spatial Forcing & \multicolumn{2}{c}{98.5} & \multicolumn{2}{c}{66.2} & \multicolumn{2}{c}{58.9} \\
WAM4D & \multicolumn{2}{c}{NR} & \multicolumn{2}{c}{\underline{93.8}} & \multicolumn{2}{c}{89.9} \\
\midrule
$\pi_{0.5}\rightarrow\pi_{0.5}^{\mathrm{graft}}$
  & 96.9 & \textbf{99.6}\dpos{2.7}
  & 82.7 & \textbf{94.0}\dpos{11.3}
  & 76.8 & \textbf{92.4}\dpos{15.6} \\
X-VLA $\rightarrow$ X-VLA$^{\mathrm{graft}}$
  & 98.1 & \underline{98.7}\dpos{0.6}
  & 72.8 & 83.7\dpos{10.9}
  & 72.8 & 82.6\dpos{9.8} \\
Fast-WAM $\rightarrow$ Fast-WAM$^{\mathrm{graft}}$
  & 97.6 & 96.9\dneg{0.7}
  & 82.6 & 86.0\dpos{3.4}
  & 81.8 & 83.3\dpos{1.5} \\
LingBot-VA $\rightarrow$ LingBot-VA$^{\mathrm{graft}}$
  & 98.5 & 97.4\dneg{1.1}
  & 88.9 & 93.3\dpos{4.4}
  & 85.7 & \underline{91.4}\dpos{5.7} \\
\bottomrule
\multicolumn{7}{@{}l}{\emph{\method: base $\rightarrow$ graft}} \\
\end{tabular*}
\end{table*}

\paragraph{RoboTwin: consistent gains, robust to visual change}
RoboTwin's paired Clean and Randomized configurations use the same dual-arm
tasks, separating task capability from robustness to scene and appearance
changes. All four hosts improve, by $+7.5\%$ on Clean and $+8.2\%$ on
Randomized on average. The VLAs gain most ($+9.8\%$ to $+15.6\%$),
while the WAMs also benefit despite their stronger base scores. Every host also improves on Randomized scenes, by $+1.5\%$ to $+15.6\%$,
although the reconstruction backbone is frozen and is not fine-tuned on RoboTwin's
randomized clutter, lighting or backgrounds: geometry learned from diverse
scenes carries across appearance changes the host has not seen. Qualitatively, the mug
column of Figure~\ref{fig:qualitative-results} shows the mechanism: the
grafted policy estimates the rack height well enough to clear the hook, where the base approaches at the wrong height. At the task level, the largest per-task gain for $\pi_{0.5}$ is Move can pot, $51\%\to100\%$ Clean and $55\%\to95\%$ Randomized (base per-task rates from \cite{li2026lingbotva}): the can must be set down at a position defined relative to the pot, which is exactly the robot-relative metric placement the bank encodes. Blocks ranking size, which needs the blocks' physical sizes, is next ($49\%\to85\%$, $26\%\to70\%$).

Grafted $\pi_{0.5}$ reaches $94.0\%/92.4\%$, above the best published
3D-conditioned policy, WAM4D ($93.8\%/89.9\%$), and our full-benchmark Spatial
Forcing run ($66.2\%/58.9\%$) is below every grafted host. These cross-paper rows
are not host-matched; the controlled result is that all four base/graft pairs
improve. Per-task results for every policy we evaluated ourselves are in
Section~\ref{app:benchmark-details}, Table~\ref{tab:robotwin-per-task}.

\paragraph{LIBERO: maintained performance on a saturated benchmark} Every
base already scores $96.9$--$98.5$, so this benchmark can only show whether
the graft integrates well into a working policy. The two VLAs improve
($+2.7\%$ for $\pi_{0.5}$, $+0.6\%$ for X-VLA) and the two WAMs slip by about
$1\%$ ($-0.7\%$ and $-1.1\%$), with all final scores remaining in the near-saturated range
(Table~\ref{tab:libero-robotwin}). Grafted $\pi_{0.5}$ is the highest entry at
$99.6\%$, but a $1.1\%$ lead on a saturated benchmark is not evidence about
geometry; we treat LIBERO as a compatibility check of one frozen
reconstruction model across four action experts and draw no conclusion about
geometry from it. Per-suite results are in Section~\ref{app:benchmark-details},
Table~\ref{tab:libero-details}.

\subsection{Gains hold under clutter, long horizons and real sensing}
\label{sec:complex-settings}

\begin{table*}[t]
\centering
\small
\setlength{\tabcolsep}{7pt}
\renewcommand{\arraystretch}{1.15}
\caption{\textbf{B1K Q-score on six tasks.}
Q-score is average task progress. $\pi_{0.5\text{-}\mathrm{RLC}}$ and $\pi_{0.5\text{-}\mathrm{Comet}}$ are the
\#1 and \#2 entries of the 2025 challenge \cite{behaviorchallenge2025} and
SERF \cite{kim2026serf} is quoted on the three tasks it reports; all three
are taken from their published scores. The grafted policy is evaluated on the
20 public evaluation instances of each task; official challenge scoring uses only 10 of those 20. Green/red: change from the host's own base.}
\label{tab:behavior-five}
\setgw{\textbf{0.714}\dpos{0.471}}{\boldmath$\pi_{0.5\text{-}\mathrm{RLC}}^{\mathrm{graft}}$}
\begin{tabular}{@{}l cc B@{\,$\rightarrow$\,}G@{}}
\toprule
\textbf{Task} & \textbf{SERF}
& \boldmath$\pi_{0.5\text{-}\mathrm{Comet}}$
& \boldmath$\pi_{0.5\text{-}\mathrm{RLC}}$ & \boldmath$\pi_{0.5\text{-}\mathrm{RLC}}^{\mathrm{graft}}$ \\
\midrule
Assembling gift baskets & \underline{0.525} & 0.312 & 0.281 & \textbf{0.632}\dpos{0.351} \\
Collecting children's toys & \underline{0.635} & 0.583 & 0.500 & \textbf{0.673}\dpos{0.173} \\
Putting shoes on rack & \underline{0.601} & 0.470 & \textbf{0.720} & 0.590\dneg{0.130} \\
Putting up Christmas decorations & NR & 0.356 & \underline{0.478} & \textbf{0.510}\dpos{0.032} \\
Clean boxing gloves & NR & 0.000 & \underline{0.200} & \textbf{0.600}\dpos{0.400} \\
Cleaning up plates and food & NR & 0.043 & \underline{0.243} & \textbf{0.714}\dpos{0.471} \\
\bottomrule
\multicolumn{5}{@{}l}{\emph{\method: base $\rightarrow$ graft}} \\
\end{tabular}
\end{table*}

\paragraph{B1K: largest gains where grasping is hardest}
Of the spatial action models we compare, only SERF~\cite{kim2026serf}
reports B1K results, on three tasks. Grafting the challenge-winning
$\pi_{0.5\text{-}\mathrm{RLC}}$~\cite{larchenko2025rlc} increases average
task progress (Q-score; Table~\ref{tab:behavior-five}); we evaluate on the
20 public evaluation instances of each task, whereas official challenge scoring uses only 10 of those 20. The graft improves five of the six tasks, by $+0.216$ Q-score on average and
up to $+0.471$; the exception is putting shoes on rack ($0.720\to0.590$).
B1K chains minute-long mobile activities without resetting between contacts,
so a small error in one action chunk leaves the robot in a state its
demonstrations rarely cover, and the next chunk starts further out of
distribution. We attribute the graft's gains at this horizon to the same
precision it adds on the tabletop: each contact lands closer to where the
demonstrations put it, so there is less error to compound.

SERF \cite{kim2026serf} builds on the same $\pi_{0.5\text{-}\mathrm{RLC}}$
checkpoint but addresses a different failure: its persistent map gives a
memoryless policy access to information about objects that have left its view (Section~\ref{sec:related}),
whereas the graft targets precision at contact and keeps no state across
observations. On the three tasks SERF reports, the grafted policy averages $0.632$ against
SERF's reported $0.587$; the margin is modest because those tasks are
dominated by the failure SERF addresses.

The graft's largest gains come where the policy fails at the grasp itself: on
the other three tasks it improves its base by $+0.301$ on average. Putting up
Christmas decorations requires picking up a thin, smooth candy cane among the
wire stems of a wreath; in Figure~\ref{fig:qualitative-results} the base
closes beside the shaft while the grafted policy closes on it. Its Q-score
gain is small ($0.478\to0.510$) because the base already completes much of
the activity. Clean boxing gloves has rounded gloves that can be grasped only
at the cuff ($0.200\to0.600$), and cleaning up plates and food requires
lifting flat plates and bowls ($0.243\to0.714$). The graft helps most where the grasp is hardest, consistent with its design
objective.

\begin{figure*}[t]
\centering
\begingroup
\setlength{\fboxsep}{0pt}
\setlength{\fboxrule}{0.35pt}

\newlength{\qualpanelheight}
% Four 6:5 views plus one square view, with four fixed 4pt gaps.
\setlength{\qualpanelheight}{\dimexpr(\linewidth-16pt-10\fboxrule)*10/58\relax}
\newcommand{\compactframe}[1]{\fcolorbox{black!45}{white}{\includegraphics[height=\qualpanelheight]{figures/qualitative_results/compact_#1.png}}}
\newcommand{\resultcolumn}[3]{%
  \begin{minipage}[t]{#1}\centering
  {\scriptsize #2\par}\vspace{2pt}%
  \compactframe{#3_success}\par\nointerlineskip\vspace{2pt}%
  \compactframe{#3_failure}\par
  \end{minipage}}
\noindent
\resultcolumn{\dimexpr\qualpanelheight*6/5+2\fboxrule\relax}{R1Pro\\orange on plate}{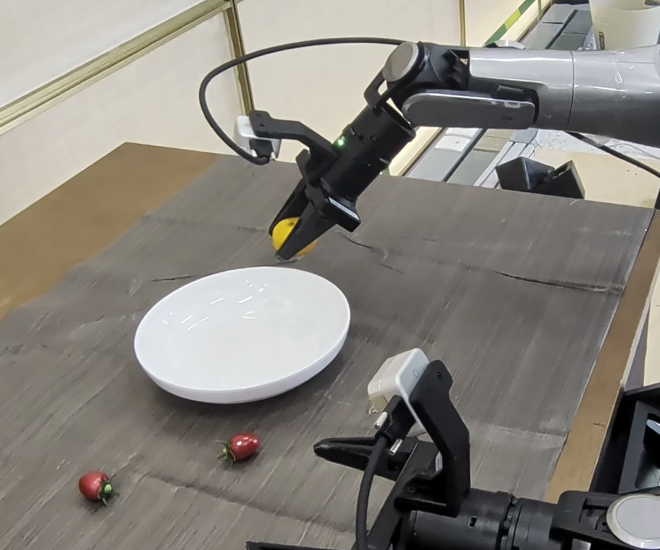}\hspace{4pt}%
\resultcolumn{\dimexpr\qualpanelheight*6/5+2\fboxrule\relax}{Piper\\plate on rack}{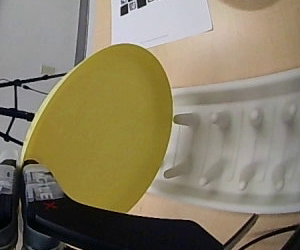}\hspace{4pt}%
\resultcolumn{\dimexpr\qualpanelheight+2\fboxrule\relax}{B1K\\pick up candy}{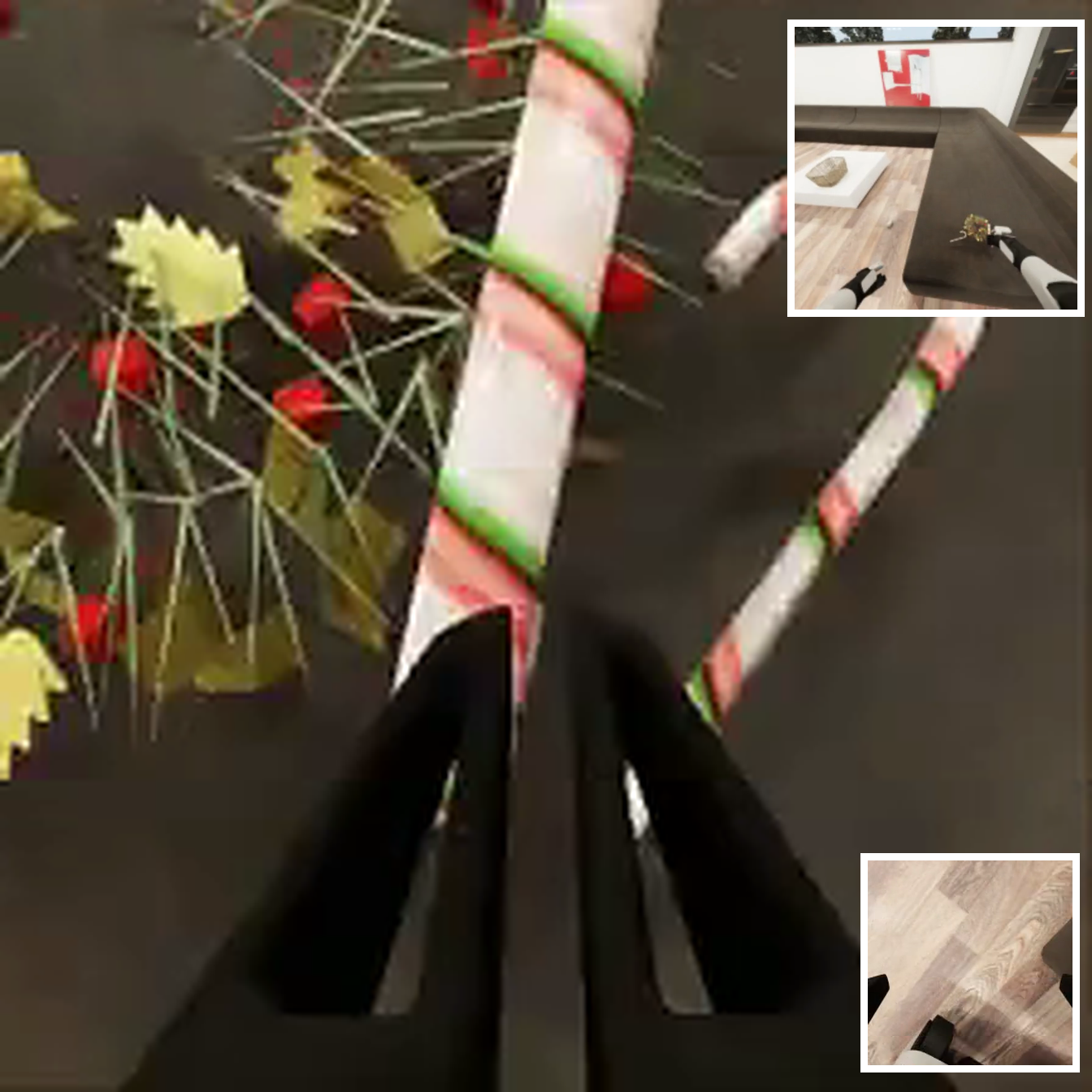}\hspace{4pt}%
\resultcolumn{\dimexpr\qualpanelheight*6/5+2\fboxrule\relax}{RoboTwin\\hang mug}{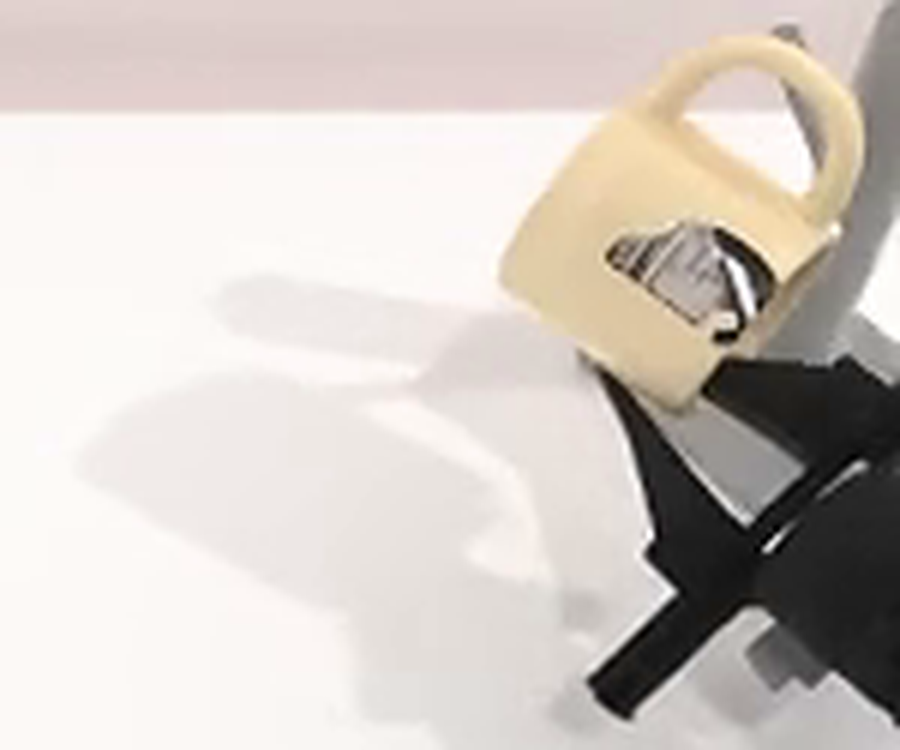}\hspace{4pt}%
\resultcolumn{\dimexpr\qualpanelheight*6/5+2\fboxrule\relax}{RoboPRO\\plate in sink}{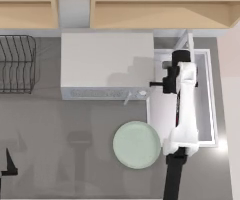}
\endgroup
\caption{\textbf{Grafted success (top) against base failure (bottom), one
column per benchmark.} Each column is the task discussed in the
corresponding paragraph. R1Pro: the grafted policy estimates plate height and
releases the orange with clearance, where the base drops it short. Piper:
plate--rack alignment avoids the collision the base drives into.
B1K: the base shuts its jaws beside the candy cane's shaft; the
grafted policy closes on it. RoboTwin: rack height is estimated well
enough to clear the hook. RoboPRO: a lower, better-placed grasp seats the
plate in the sink instead of catching its rim.}
\label{fig:qualitative-results}
\end{figure*}

\begin{table*}[t]
\centering
\footnotesize
\setlength{\tabcolsep}{1pt}
\renewcommand{\arraystretch}{1.15}
\caption{\textbf{RoboPRO success (\%).} Each cell reads Easy/Hard. Easy is
RoboPRO's success rate (SR); Hard is its stricter harsh success rate (HSR),
scored on the same episodes.
Bases are reported by \cite{li2026robopro} and grafted scores are ours. Green: change from the host's own base.}
\label{tab:robopro}
\settowidth{\gw}{\textbf{78.8}\,/\,\textbf{73.7}}
\begin{tabular*}{\linewidth}{@{\extracolsep{\fill}}l B@{\extracolsep{0pt}\;$\rightarrow$\;}G@{\extracolsep{\fill}\hspace{40pt}}B@{\extracolsep{0pt}\;$\rightarrow$\;}G@{\hspace{46pt}}}
\toprule
Model & \multicolumn{2}{c}{Clean (Easy/Hard)} & \multicolumn{2}{c}{Clutter (Easy/Hard)} \\
\midrule
$\pi_{0.5}\rightarrow\pi_{0.5}^{\mathrm{graft}}$
  & \underline{70.3}\,/\,\underline{65.7} & \textbf{78.8}\,/\,\textbf{73.7}\rlap{\dpos{8.5/{+}8.0}}
  & \underline{60.9}\,/\,\underline{47.8} & \textbf{69.8}\,/\,\textbf{49.4}\rlap{\dpos{8.9/{+}1.6}} \\
X-VLA\,$\rightarrow$\,X-VLA$^{\mathrm{graft}}$
  & 49.5\,/\,46.7 & 60.9\,/\,49.9\rlap{\dpos{11.4/{+}3.2}}
  & 39.7\,/\,27.6 & 45.3\,/\,35.3\rlap{\dpos{5.6/{+}7.7}} \\
\bottomrule
\multicolumn{5}{@{}l}{\emph{\method: base $\rightarrow$ graft}} \\
\end{tabular*}
\end{table*}

\paragraph{RoboPRO: improved collision avoidance in clutter} Both hosts improve
in every condition on both Easy and Hard (Table~\ref{tab:robopro}). Cluttered scenes place distractors inside the path the demonstration
sweeps, and the Hard rate additionally penalizes collisions with them. Hard
rises together with Easy in every cell, so the grafted policies complete
more tasks without colliding:
under clutter, $\pi_{0.5}$ gains $+8.9\%$ Easy and $+1.6\%$ Hard, and X-VLA
gains $+5.6\%$ Easy and $+7.7\%$ Hard, more than its clean-scene Hard gain of
$+3.2\%$. Knowing where each distractor sits relative to the grippers lets the policy
route around it; the sink column of Figure~\ref{fig:qualitative-results}
shows a grasp low and square enough to seat the plate rather than catch its
rim. Per-task results for all 80 tasks are in Section~\ref{app:benchmark-details},
Table~\ref{tab:robopro-per-task}.

\begin{table*}[t]
\centering
\small
\setlength{\tabcolsep}{3pt}
\renewcommand{\arraystretch}{1.05}
\caption{\textbf{Real-robot evaluation (SR, \%).} Matched base/graft
comparisons on three platforms, with 10 trials per task. A trial is considered successful only if all sub-tasks are completed. Green: change from
the base.}
\label{tab:real-robot}
\settowidth{\gw}{80\dpos{50}}
\begin{tabularx}{\linewidth}{@{}lX B@{\,$\rightarrow$\,}G c@{}}
\toprule
Platform & Task & \multicolumn{2}{c}{SR} & Trials \\
\midrule
UR5e & Pick up marker and place in pen cup (thin objects, constrained placement) & 30 & \textbf{80}\dpos{50} & 10 \\
\midrule
\multirow{3}{*}{Piper} & Clean up table (transparent objects, long horizon) & 20 & \textbf{70}\dpos{50} & 10 \\
& Open tea can (bimanual dexterous manipulation) & 10 & \textbf{40}\dpos{30} & 10 \\
& Place plate on rack (thin objects, constrained placement) & 70 & \textbf{90}\dpos{20} & 10 \\ \midrule
\multirow{2}{*}{R1Pro} & Pick up tangerine and place on plate (thin objects, constrained placement) & 50 & \textbf{90}\dpos{40} & 10 \\
& Clean up table (deformable object, long horizon) & 50 & \textbf{80}\dpos{30} & 10 \\
\bottomrule
\multicolumn{5}{@{}l}{\emph{\method: base $\rightarrow$ graft}} \\
\end{tabularx}
\end{table*}

\paragraph{Real robots: gains transfer to physical sensing}
The real-robot trials test whether the gains survive measured depth and real
calibration error, on three platforms of different form: a single-arm UR5e,
a bimanual pair of AgileX Pipers and the humanoid Galaxea R1Pro. Grafting
improves success on all six tasks, by $20\%$ to $50\%$
(Table~\ref{tab:real-robot}), and the largest gains come where the base fails
on depth at contact: marker grasping ($30\%\to80\%$, approach at the wrong
height), Piper table cleanup ($20\%\to70\%$, transparent objects missed and a
white mug confused with the background) and R1Pro tangerine placement
($50\%\to90\%$, fruit dropped short of the plate). Plate-to-rack placement
improves from $70\%$ to $90\%$ through better rack alignment
(Figure~\ref{fig:qualitative-results}; full-size rollouts in Figure~\ref{fig:qualitative-results-full}) and deformable-object cleanup from
$50\%$ to $80\%$. Tea-can opening remains hardest ($10\%\to40\%$): lid-depth
errors are reduced, but the task also demands bimanual coordination that
geometry alone does not supply. That the graft carries to hardware without
any adaptation of its geometry pathway is expected: DA3 is trained on large
and diverse real-world imagery, so its reconstruction is, if anything, better
matched to real scenes than to simulated ones, and the grounded latents the
graft reads are the same on either side of the simulation boundary.

\subsection{The gain comes from the grounded geometry and its placement}
\label{sec:ablations}

\begin{figure*}[t]
\centering
\includegraphics[width=\linewidth]{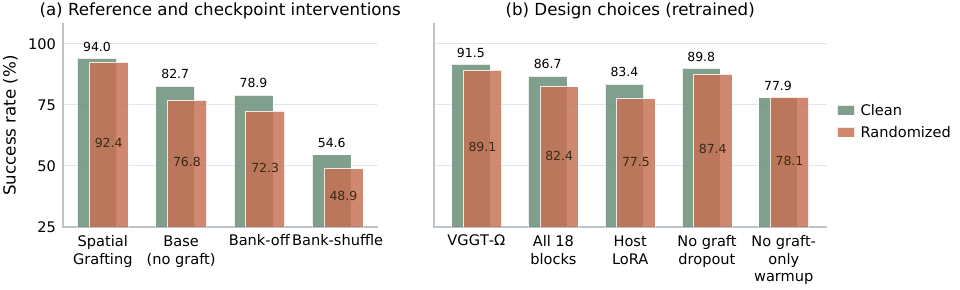}
\caption{\textbf{Ablations on RoboTwin with $\pi_{0.5}$ (SR \%).}
(a) Reference, ungrafted base, and bank interventions on a fixed checkpoint.
(b) One design choice retrained at a time. Equal-width Clean (green) and
Randomized (terracotta) bars overlap at each configuration and share a 25\%
baseline. }
\label{fig:ablations}
\end{figure*}

\paragraph{Bank content, not capacity, carries the gain} Figure~\ref{fig:ablations}(a) intervenes
on a fixed $\pi_{0.5}$ checkpoint on RoboTwin. Zeroing the banks cuts
Clean/Randomized by $15.1\%/20.1\%$ to $78.9\%/72.3\%$, below the ungrafted
base. Substituting another episode's banks cuts a further $24.3\%/23.4\%$ to $54.6\%/48.9\%$. Weights and host inputs stay fixed: mismatched bank content
hurts more than absent banks, and bank-only routing removes a direct
action-state bypass.

\paragraph{The backbone is interchangeable; late injection is essential}

The $\pi_{0.5}$ RoboTwin reference uses frozen DA3 features at four
levels, the complete spatial grid, and injection into the final six
action-expert blocks; host-specific configurations are in
Section~\ref{app:training}. Panel (b) retrains one design change at a time
from this reference, which also uses dropout and warmup. Replacing
DA3~\cite{lin2026depthanything3} with VGGT-$\Omega$
\cite{wang2026vggtomega}, everything else unchanged, reaches $91.5\%$ on Clean and $89.1\%$ on Randomized. This is $2.5\%$ and $3.3\%$ below the reference but $8.8\%$ and $12.3\%$ above
the base: the
interface transfers across reconstruction backbones, while its ceiling still
depends on their quality. Conditioning all $18$ blocks instead of the final six
costs $7.3\%$ and $10.0\%$, supporting late injection after the host has
formed a task-conditioned action.

\paragraph{Stable adaptation needs warmup and dropout; LoRA is not enough} LoRA gives $83.4\%/77.5\%$, within $0.7\%$ of the base on both configurations, whereas full finetuning delivers the graft.
Training without whole-graft dropout ($p_{\mathrm{drop}}=0$) costs $4.2\%$ and $5.0\%$, showing the value of periodically acting without the spatial
residual. Skipping the graft-only warmup and adapting the host from the first
update costs $16.1\%$ and $14.3\%$, dropping Clean below the base. This is
the largest single design effect and helps explain why schedules need to vary
with the pretrained host.

Across all variants, the interface's main effect is to close the
Clean-minus-Randomized gap (Section~\ref{app:training}).

\section{Reference graft configurations}
\label{app:training}

This section records the graft configuration used for the reported RoboTwin
results.

\paragraph{Objective} The loss of Eq.~\eqref{eq:graft-loss} is the host's own
flow-matching objective with the banks added to its conditioning set; the
noise distribution, weighting and action parameterization are the host's
own, and there is no auxiliary
geometric supervision, so a grafted run differs from its baseline only in
the presence of $\{B_v\}$.

\paragraph{$\pi_{0.5}$--RoboTwin}
Initialization is the public \texttt{pi05\_base} checkpoint. The spatial
backbone is DA3-GIANT-1.1 in backbone-only mode, frozen, with taps at layers
$19,27,33,39$ resampled to an $18\times24$ grid ($P=432$) over three
calibrated views, ordered as overhead, left wrist, right wrist. Metric depth
is ground truth from the simulator, retained at millimeter precision and
aligned to the feature grid by nearest-neighbor resampling. The two
end-effector anchors are the optical centres of the left and right wrist
views, recovered from the same extrinsics used for back-projection. The
action-token transformer has
$L=18$ blocks and the graft attaches to the final six blocks $12$--$17$.

Bank and grafting hyperparameters: residual scale $\rho=1$; output projections and FiLM initialized at standard deviation $10^{-2}$;
Fourier bands
$B=10$; workspace centre $c=(-0.0222,\,0.0967,\,0.7403)$ and isotropic scale
$\lambda=0.5198$, fitted on the training split; grid and view embeddings
scaled by $0.25$; query/key normalization enabled; attention logit gain
initialized at $3$ and clamped to $8$; whole-graft dropout
$p_{\mathrm{drop}}=0.1$; bank-only routing and bias-free auxiliary attention output and
merge projections enabled; per-tap, fused-latent and per-sample bank centering
all enabled. Evaluation predicts action chunks under the
host's own flow-matching sampler with $30$ denoising steps.

\paragraph{Fourier band count} We set $B$ against measurement precision
rather than maximum precision. At the RoboTwin workspace scale $B=10$ resolves
roughly $5$~mm in the highest band, while depth is quantized at $1$~mm and
pooled over a patch covering one to five centimeters of surface, so further
bands would encode noise rather than geometry.

\paragraph{X-VLA--RoboTwin}
The X-VLA graft uses the same bank builder and the same grafting topology
with identical settings---$\rho=1$, initialization $10^{-2}$, $B=10$,
query/key normalization, $p_{\mathrm{drop}}=0.1$, bank-only routing,
bias-free auxiliary attention output and merge projections, and per-tap, fused and
per-sample centering---grafted onto the final six of its $24$ action blocks.
The PyTorch implementation also omits the primary attention output bias;
the JAX $\pi_{0.5}$ implementation retains it. The hosts, workspace constants,
and camera layouts differ, while the bank construction and two-stage grafting
scheme are shared.

\paragraph{Fast-WAM--RoboTwin} Fast-WAM \cite{yuan2026fastwam} uses a
30-block ActionDiT with a pretrained Wan2.2 video expert. The graft starts
from the released \texttt{robotwin\_uncond\_3cam\_384} checkpoint, attaches
to blocks 20--29, and uses three calibrated views, four DA3-NESTED taps
(layers 19, 27, 33 and 39) on an $18\times24$ grid, and measured metric
depth. Only the spatial bank builder and the ten grafting modules are
trained, on the unchanged RoboTwin training split; DA3-NESTED, the video and
action experts, the variational autoencoder and the proprio encoder remain
frozen. Evaluation predicts 32 absolute-joint actions, executes 24 before
replanning, and uses 10 flow-matching steps under the raw protocol.

\paragraph{LingBot-VA--RoboTwin} LingBot-VA \cite{li2026lingbotva} is an
autoregressive video-action model with a dual-stream mixture of
transformers. The graft starts from its released RoboTwin post-training
checkpoint, retains its autoregressive video-action rollout, and attaches the
same bank-only interface to the final ten blocks of its action stream
through the linear bridges of Table~\ref{tab:param-cost}, since its blocks
are wider than the injection width; the video-generation pathway is left
unchanged.

\paragraph{Clean-minus-Randomized gap under ablation} The ungrafted
$\pi_{0.5}$ base has a $5.9\%$ Clean-minus-Randomized gap on RoboTwin; the
reference graft narrows it to $1.6\%$. Bank-off ($6.6\%$) and bank-shuffle
($5.7\%$) restore nearly the base gap, LoRA leaves it at $5.9\%$, and the
retrained variants narrow it partially: $2.4\%$ without dropout, $4.3\%$ with
all-block injection, and $2.4\%$ with VGGT-$\Omega$.

\section{Evaluation protocol and provenance}
\label{app:eval-protocol}

\paragraph{Matched comparisons}
For matched in-house base/graft runs, we hold the host checkpoint,
demonstrations, observations, action representation, training recipe, and
rollout protocol fixed; the graft supplies the spatial banks. These paired
results are comparable within a host, whereas scores from different hosts
are not controlled comparisons. Other table entries are published baselines; their provenance is specified
below.

\paragraph{Benchmark setup and training data}
On LIBERO~\cite{liu2023libero}, the simulated embodiment is a single-arm
Franka Panda. We meta-train on all 130 tasks of the original dataset, with
50 demonstrations per task (6,500 total), and evaluate over 50 seeds per task.
On RoboTwin~\cite{chen2025robotwin2}, the embodiment is a simulated dual-arm
Aloha-AgileX. We use all 50 tasks, with 50 clean and 500 randomized
trajectories per task (27,500 total), and evaluate our runs over exactly 20 seeds per task and configuration, the
same seeds for every model (Table~\ref{tab:robotwin-per-task}). Clean and Randomized share tasks but
differ in clutter, lighting, background, table height and instruction phrasing.
The VLA hosts start from broad multi-embodiment pretraining checkpoints;
the WAM hosts start from their released RoboTwin checkpoints.

For B1K~\cite{li2023behavior1k,behaviorchallenge2025}, we use the challenge's
simulated Galaxea R1Pro, a wheeled dual-arm robot, and fine-tune a separate
policy for each task from 100 demonstrations. Q-score is the fraction of
an activity's goal conditions satisfied, averaged over episodes. We evaluate on all 20 public evaluation instances of each task. Official challenge scoring uses only 10 of those 20 instances; our reported scores aggregate all 20.

On RoboPRO~\cite{li2026robopro}, we use the simulated dual-arm Aloha-AgileX
and train on the full clean and cluttered demonstration set, approximately
16,000 trajectories across 80 tasks. Evaluation reports the benchmark's
Easy and Hard success criteria. Cluttered scenes add distractors within
the demonstrated motion path while holding the underlying skills fixed.

Real-robot evaluation uses UR5e, AgileX Piper, and Galaxea R1Pro, covering
thin-object grasping, constrained placement, multi-step table cleanup, and
bimanual manipulation. Each base/graft condition is evaluated over ten
trials per task; success requires completing all subtasks.
Table~\ref{tab:real-robot} specifies the tasks and platforms.

\paragraph{Result provenance}
The main tables mix published numbers with our own runs; we record the
source of each value here rather than marking it in place. Every \method
entry is trained and evaluated in-house. \emph{LIBERO}: every non-grafted entry is quoted from published work. The
X-VLA, Fast-WAM and LingBot-VA bases are taken from their papers; the
$\pi_{0.5}$ base is taken from the openpi release \cite{openpi2025}, since
the $\pi_{0.5}$ paper reports no LIBERO result. \emph{RoboTwin}: the WAM4D, $\pi_{0.5}$ and X-VLA bases are published, the
latter two taken from \cite{li2026lingbotva} and \cite{bi2026motus}
respectively; we use these published values because those models were
trained on the same 50-task RoboTwin demonstration set as our grafts, so
base and graft are directly comparable. The Fast-WAM and LingBot-VA bases
are our evaluations of their public checkpoints with our in-house evaluation
seeds and pipeline, the same seeds used for every grafted run; and Spatial Forcing is the one reference we trained ourselves: its paper
trains on a seven-task subset, and we used its public RoboTwin training
code on the full 50-task dataset.
\emph{B1K}: the $\pi_{0.5\text{-}\mathrm{Comet}}$ and $\pi_{0.5\text{-}\mathrm{RLC}}$
scores are those published by the challenge organizers
\cite{behaviorchallenge2025} and the SERF column is quoted from
\cite{kim2026serf}; our grafted scores are evaluated on all 20 public
evaluation instances of each task, whereas official challenge scoring uses only 10 of those 20. These aggregates therefore use different instance sets. \emph{RoboPRO}: every baseline is from
\cite{li2026robopro}.

\paragraph{Published comparators}
GLaD, GeoVLA and SpatialVLA do not report RoboTwin, and WAM4D does not
report LIBERO; these NR entries reflect what the cited papers evaluate, not
what the methods can run. WAM4D's $93.8/89.9$ covers all 50 RoboTwin tasks
with 50 clean and 500 randomized demonstrations per task, but its host and
training differ from ours, so it is a level rather than a matched
comparison. On B1K, SERF reports three tasks, which we quote in
Table~\ref{tab:behavior-five}.

\section{Detailed benchmark results}
\label{app:benchmark-details}

This section reports the per-suite and per-task results behind the
aggregates of the main text. Every value in these tables that is not marked
as quoted from a published work was produced by us with the evaluation code,
seed lists and instance lists included in the supplementary material, under
the protocols of Section~\ref{app:eval-protocol}.

\paragraph{LIBERO by suite} Table~\ref{tab:libero-details} breaks the LIBERO
averages of Table~\ref{tab:libero-robotwin} into the four standard suites
(Spatial, Object, Goal and Long) for every host, ungrafted and grafted,
alongside the published reference policies. Each grafted entry is our
evaluation over 50 seeds per task with the supplied evaluation script.

\paragraph{RoboTwin by task} Table~\ref{tab:robotwin-per-task} lists
Clean/Randomized success on each of the 50 RoboTwin tasks for every policy we
evaluated ourselves: the four grafted hosts, the ungrafted Fast-WAM and
LingBot-VA checkpoints, and the VGGT-$\Omega$ variant of grafted
$\pi_{0.5}$. All runs use the same 20 seeds per task; the seed bank and the
evaluation pipeline that produced these numbers are included in the
supplementary material.

\paragraph{RoboPRO by task} Table~\ref{tab:robopro-per-task} lists Easy/Hard
success for the two grafted hosts on each of the 80 RoboPRO tasks, in clean
and cluttered scenes, over 20 seeds per task. The evaluation configuration
and seed lists are included in the supplementary material.

\section{Limitations}
\label{sec:limitations}

Three limitations remain. First, we evaluated only flow-matching action
experts; other policies with accessible action tokens and residual updates
could use the interface but remain untested. Second, the length of the graft-only warmup depended on how close each host
already was to the target data; no single setting was best for all four
hosts. Third, errors in
the frozen backbone's depth or features limit the available geometry,
particularly for high-precision contact.

\section{Conclusion}

This paper started from a specific failure of pretrained flow-matching
policies: VLAs and WAMs select the right object and the right sequence of
motions, yet misplace the grasp, because their 2D backbones encode neither
metric scale nor where a surface lies relative to the gripper. We argued
that the missing ingredient is not geometry as such, which sensors and
reconstruction models already supply, but an interface. Prior spatial action
models either consume reconstruction latents without metric grounding or
supply explicit metric geometry without the latents; each is built for one
policy family, validated in a limited number of settings, and often dependent on inputs a
deployed robot lacks. \method is that interface. It binds each frozen
reconstruction latent, at its own grid location, to its absolute metric
position and its offset from every end effector, and grafts the bound tokens
by bank-only cross-attention into the final blocks of the host's action
expert, from RGB, calibration and depth alone.

The experiments answer the three questions this framing poses. One graft
improves every host: the same construction, unchanged, improves two VLAs and
two WAMs initialized from public checkpoints on RoboTwin by $+7.5\%$ Clean and
$+8.2\%$ Randomized on average, with grafted $\pi_{0.5}$ reaching
$94.0/92.4$, above the strongest published 3D-conditioned policy, while
leaving saturated LIBERO intact. The gains hold where geometry matters most:
all eight RoboPRO host--condition cells improve, including the stricter
collision-penalizing criterion under clutter; on B1K the graft improves the
challenge-winning policy's performance on five of six tasks, by up to $0.47$ Q-score and
most on the tasks that fail at the grasp, and exceeds a map-conditioned
policy in the mean of the tasks both report without keeping any scene
state; and on three real robots every task improves by $20\%$ to $50\%$.
The gain comes from the grounded geometry and its placement rather than from
added parameters: zeroing the banks drops grafted $\pi_{0.5}$ below its own
base, substituting another episode's banks drops it a further $24\%$,
injecting into every block is worse than injecting late, and swapping DA3
for VGGT-$\Omega$ keeps most of the gain. Together these results support
the claim of the introduction: a pretrained action expert can act on
reconstruction latents once they are expressed in absolute metric coordinates and end-effector-relative
offsets, and one such interface serves the current generation of
flow-matching policies. Section~\ref{sec:limitations} records what remains
open.

\bibliographystyle{IEEEtran}
\bibliography{references}

\clearpage
% Extended qualitative figure and detailed tables follow the references.
\begin{figure*}[p]
\centering
\begingroup
\setlength{\tabcolsep}{3pt}
\setlength{\fboxsep}{0pt}
\setlength{\fboxrule}{0.35pt}
\renewcommand{\arraystretch}{1.02}
\newcommand{\resultframe}[1]{\fcolorbox{black!45}{white}{\includegraphics[width=1.72in]{figures/qualitative_results/#1.png}}}
\begin{tabular}{@{}lcc@{}}
 & \scriptsize Successful rollout & \scriptsize Failure case \\
\scriptsize \shortstack[l]{Real R1Pro\\fruit placement\\\\The model \\without grafting\\ leaves insufficient\\ clearance when placing\\ the tangerine.}
  & \resultframe{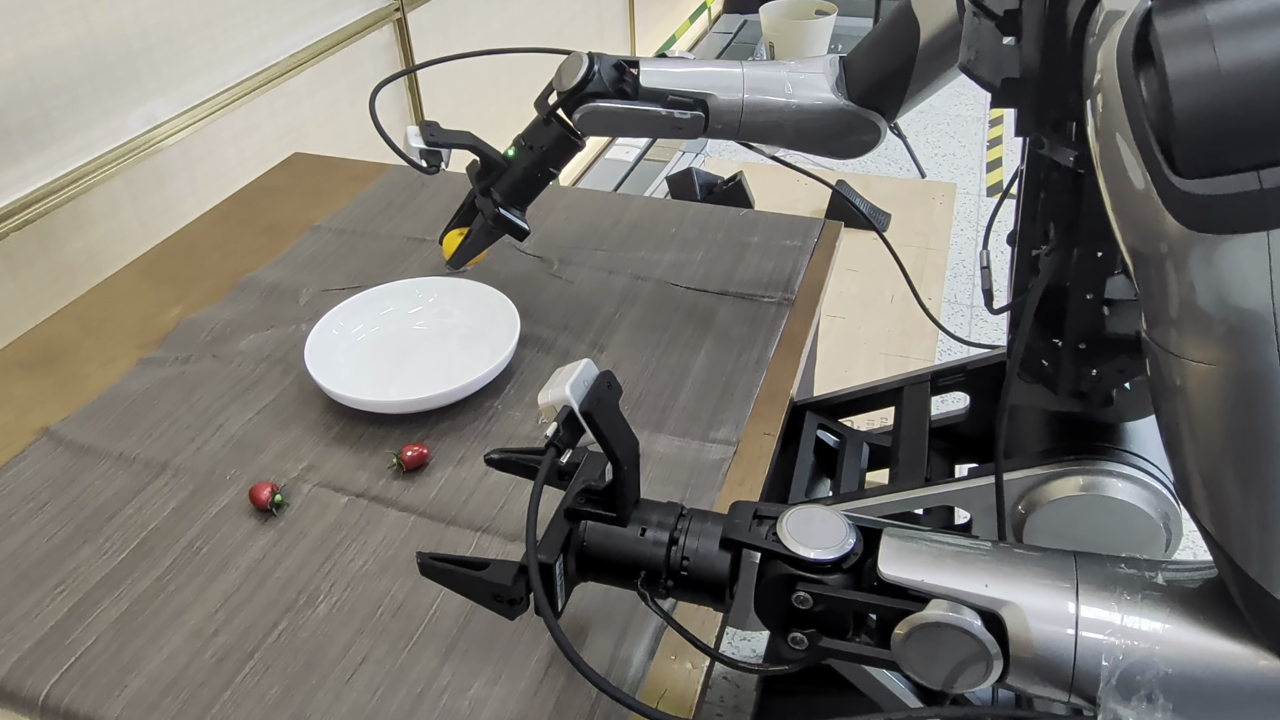} & \resultframe{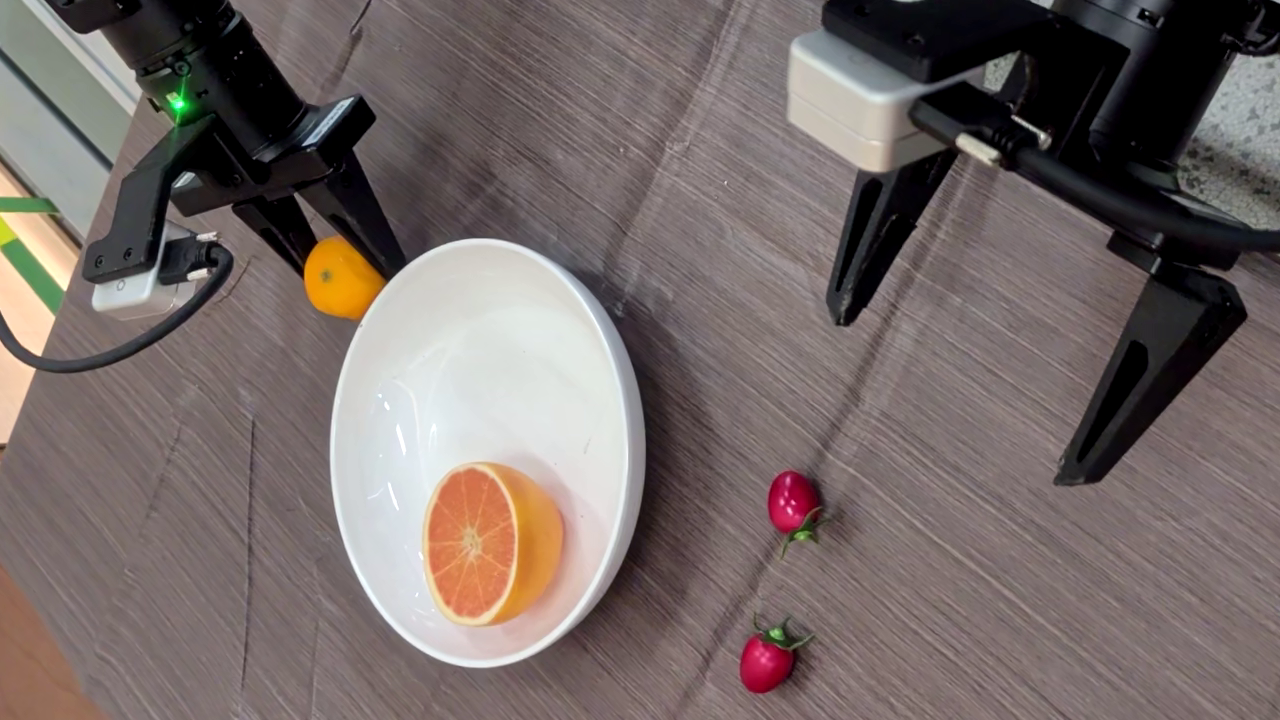} \\
\scriptsize \shortstack[l]{Real Piper\\plate placement\\(wrist views)\\\\The model \\without grafting \\ misaligns the plate \\and collides with the rack. }
  & \resultframe{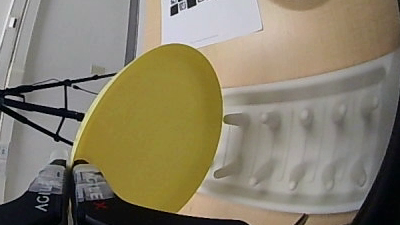} & \resultframe{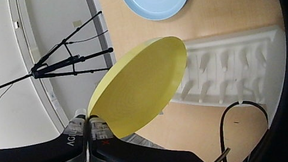} \\
\scriptsize \shortstack[l]{Real Piper\\tea-can opening\\\\The model\\without grafting\\grasps the tea-can lid \\too high\\ and immediately drops it.}
  & \resultframe{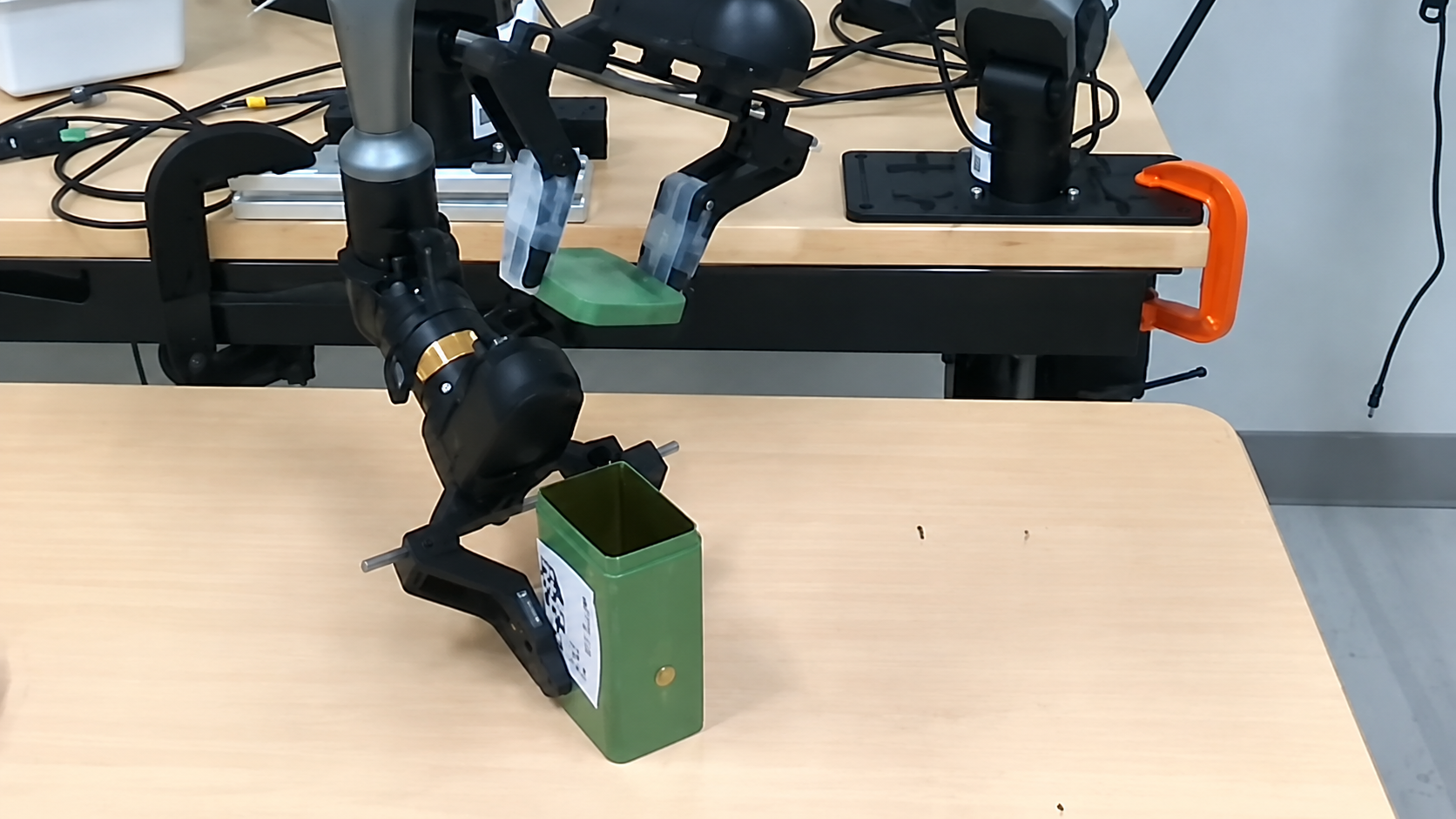} & \resultframe{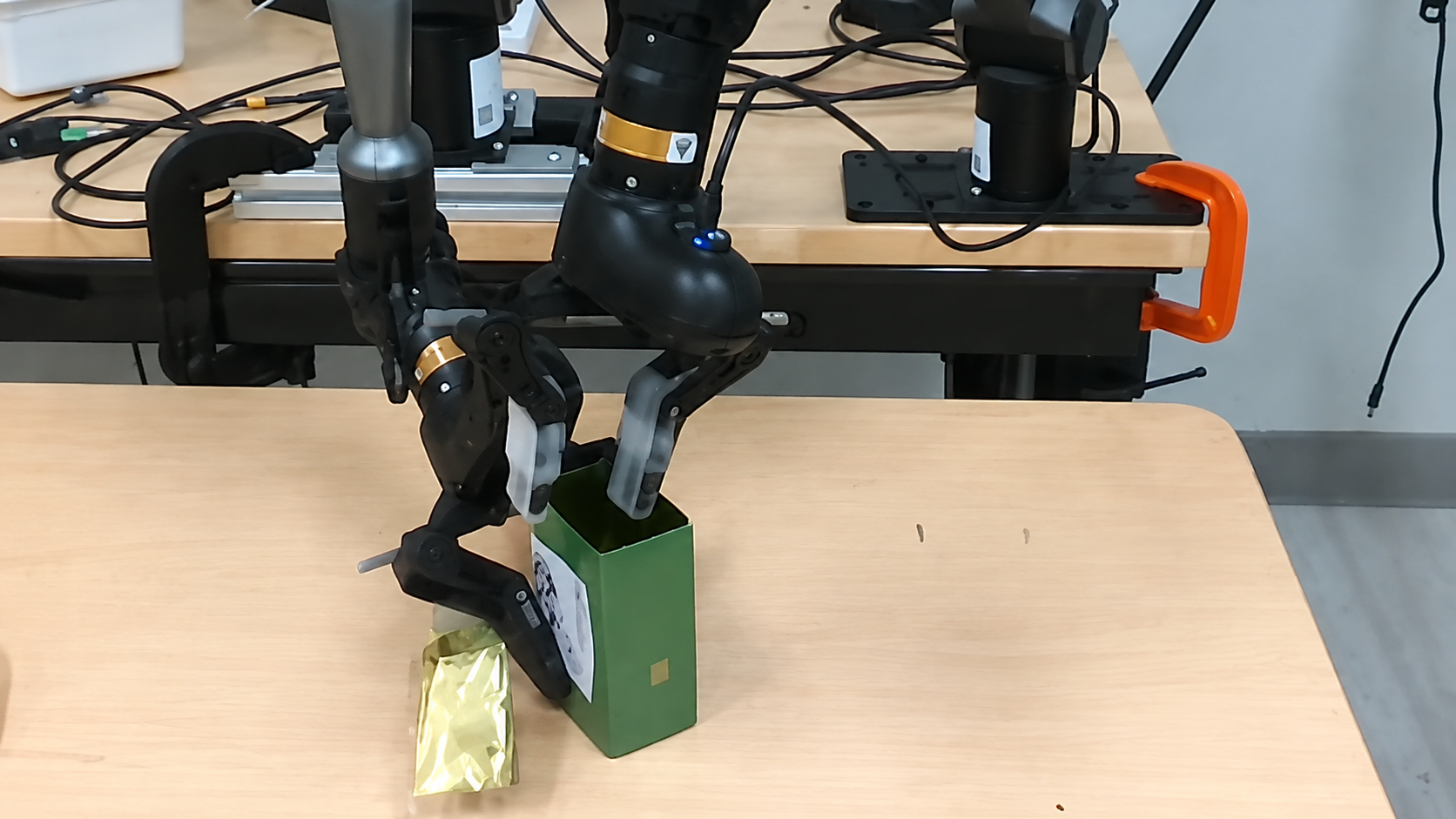} \\
\scriptsize \shortstack[l]{Real UR5e\\marker grasp\\\\The model\\without grafting\\ grasps the marker\\too high.}
  & \resultframe{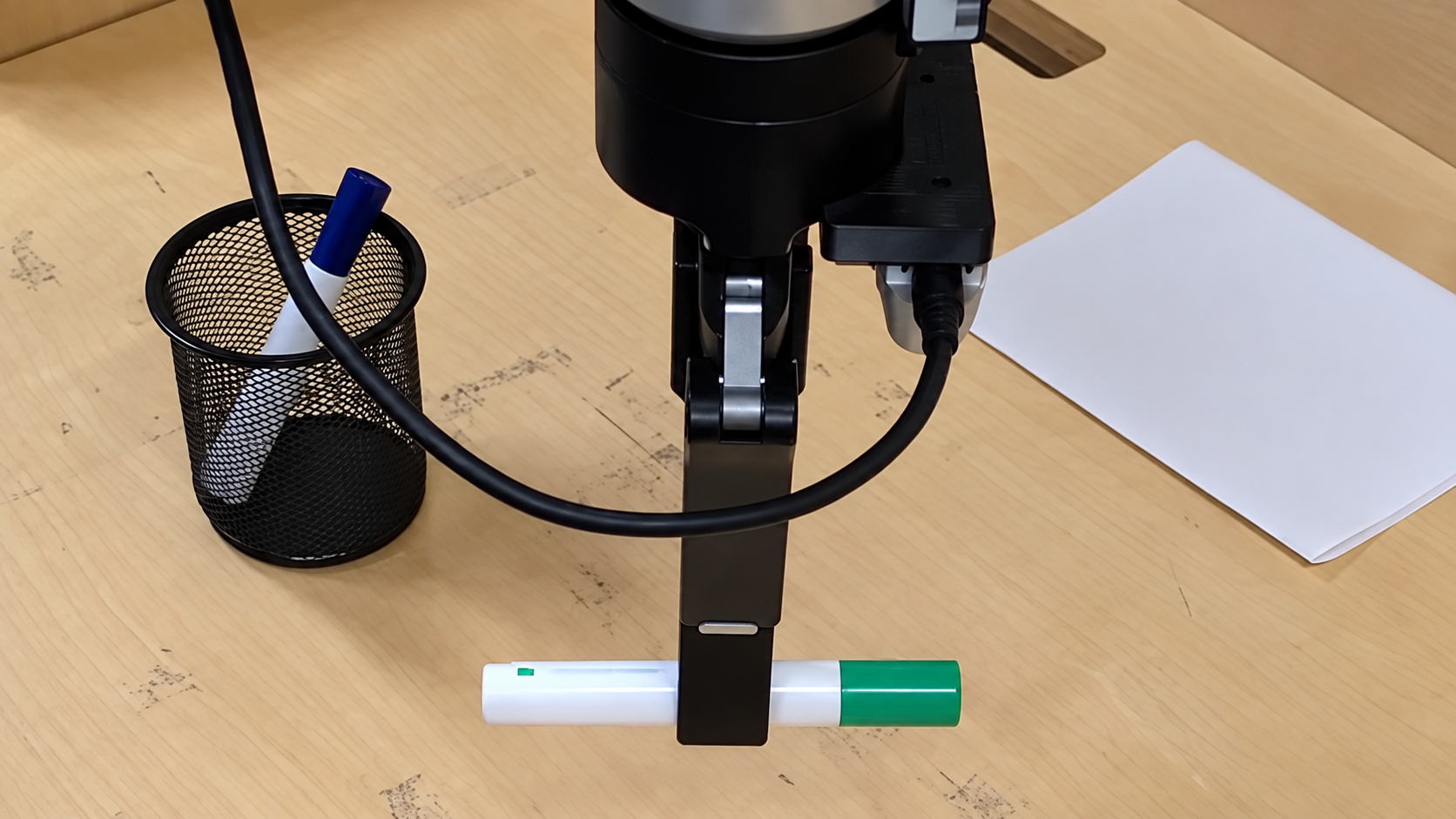} & \resultframe{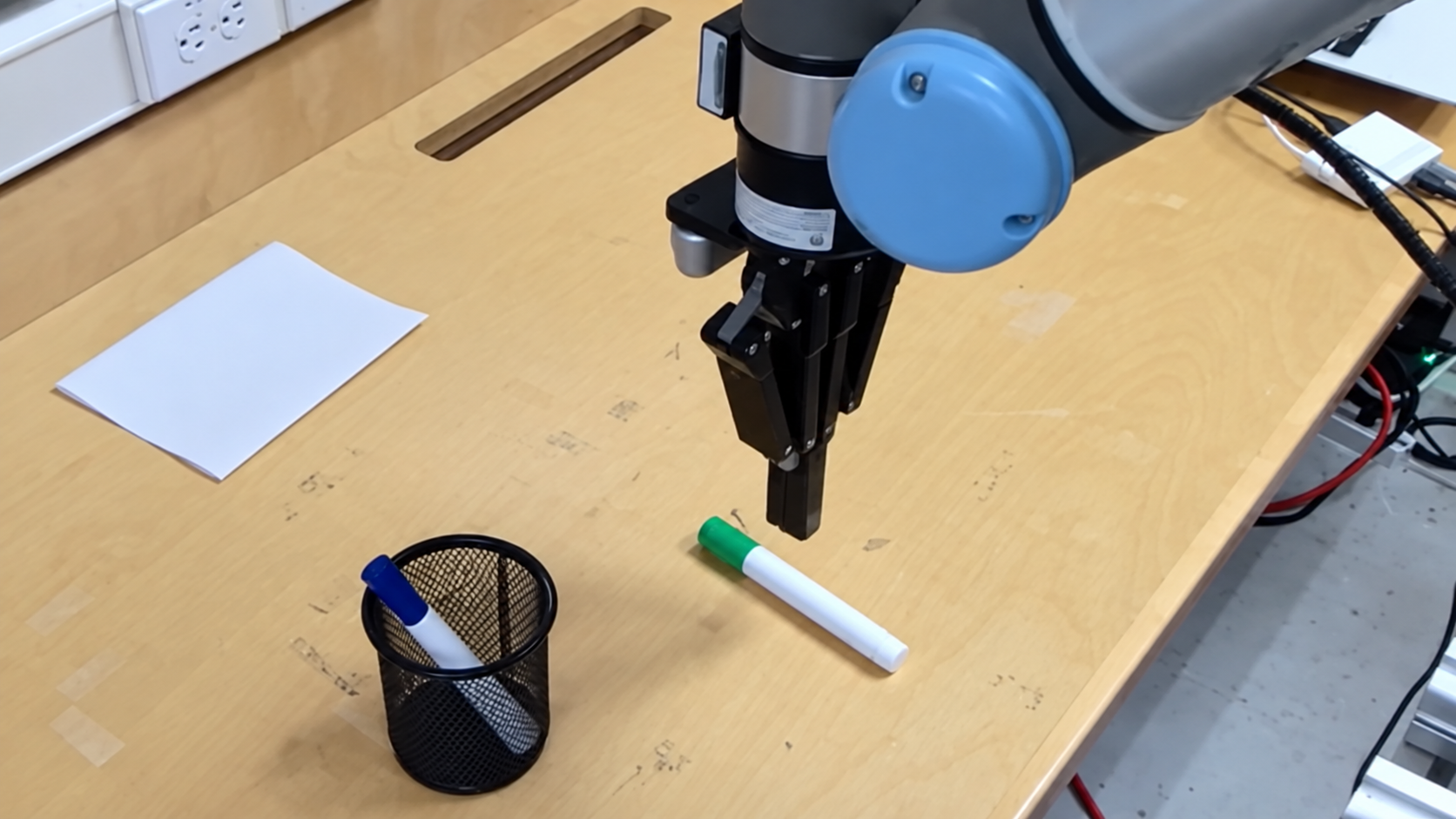} \\
\scriptsize \shortstack[l]{B1K\\(R1Pro), candy pickup\\\\The model\\without grafting\\ grasps the candy\\too high.}
  & \resultframe{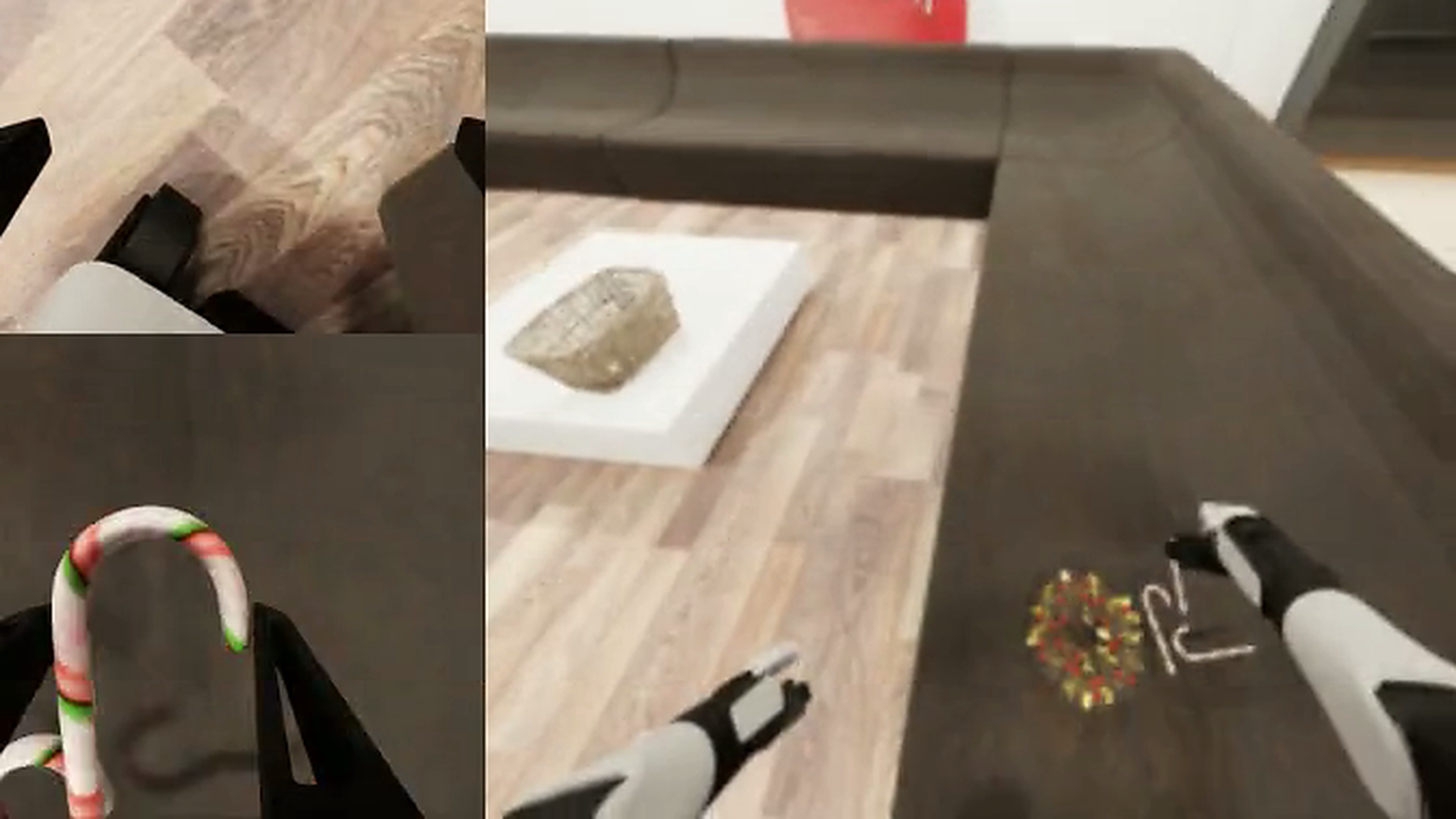} & \resultframe{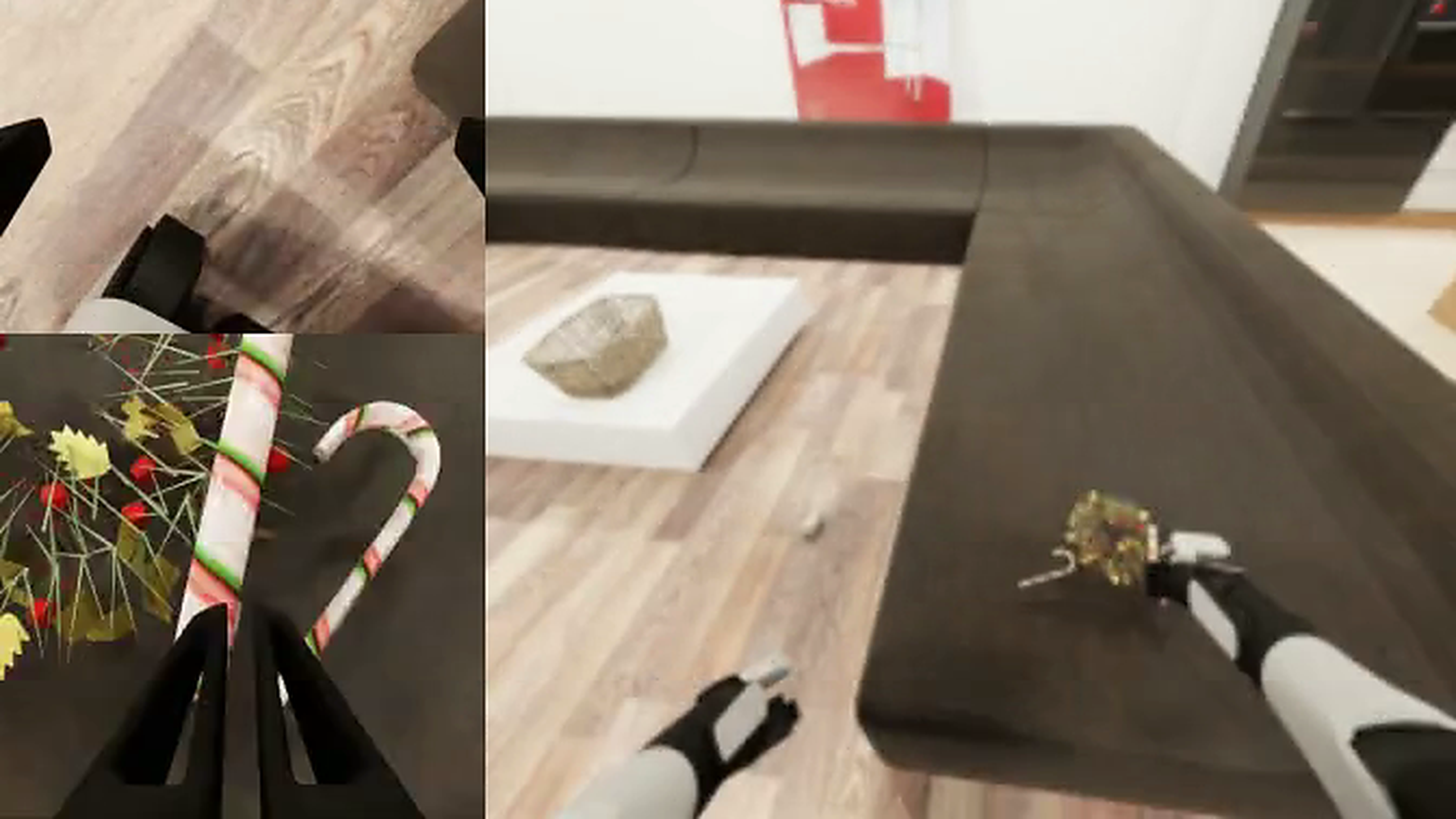} \\
\scriptsize \shortstack[l]{RoboTwin\\(ALOHA), hang mug\\\\The model\\without grafting\\provides insufficient\\clearance to hang the mug.}
  & \resultframe{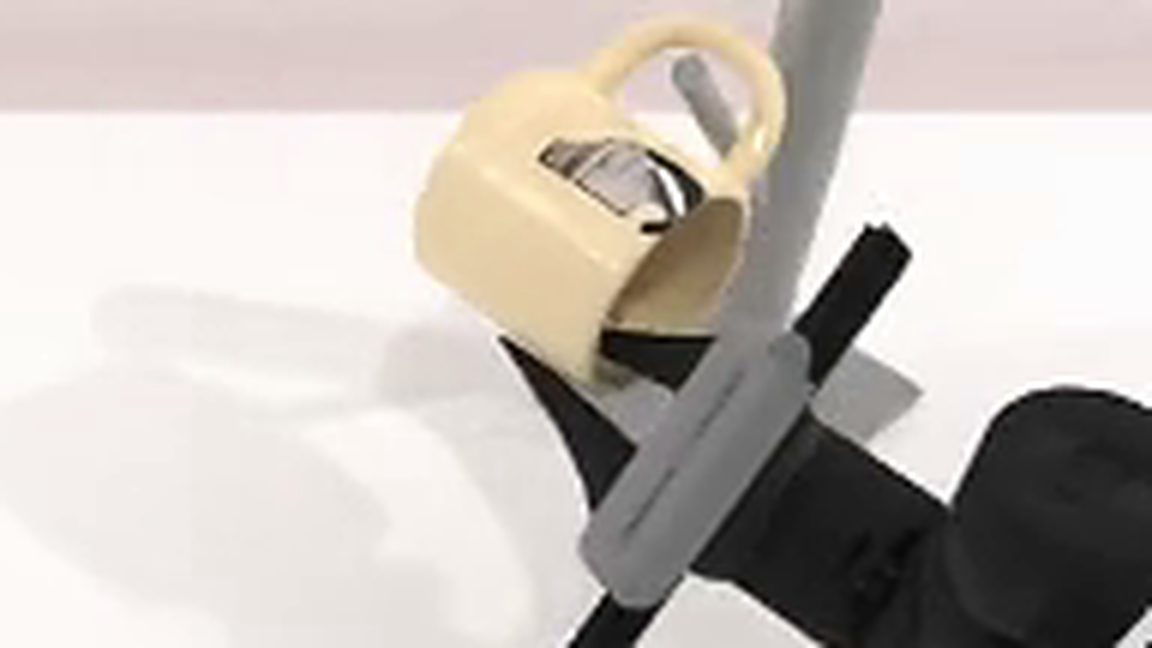} & \resultframe{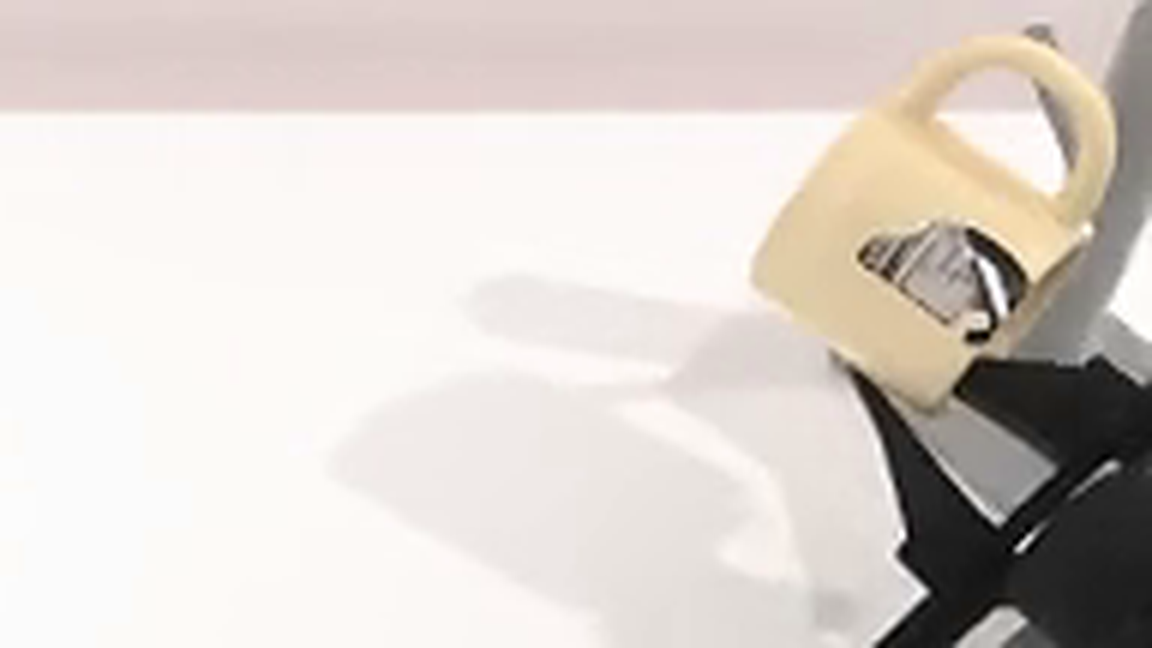} \\
\scriptsize \shortstack[l]{RoboPRO\\(ALOHA), plate pickup\\\\The model\\without grafting\\ grasps the plate\\too high.}
  & \resultframe{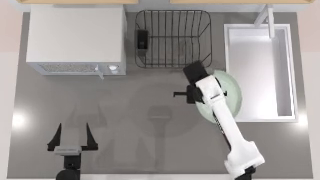} & \resultframe{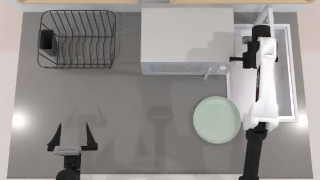}
\end{tabular}
\endgroup
\caption{\textbf{Success and failure modes across platforms.} Left and right
show successful and failed rollouts. For R1Pro fruit placement, improved
plate-height estimation leaves clearance above the rim, while the failure
approaches too low. For Piper plate placement, better plate--rack alignment
avoids collision; the two panels use left- and right-wrist fisheye cameras,
respectively. In B1K candy pickup and RoboPRO plate pickup, target
depth supports a lower grasp pose. In RoboTwin mug hanging, estimating the
rack height positions the mug at the support. The added Piper tea-can
opening and UR5e marker-grasp panels are cropped and AI-enhanced for
visual clarity; generative deblurring may reconstruct fine details, so
these panels are illustrative rather than unaltered experimental frames.}
\label{fig:qualitative-results-full}
\end{figure*}
\clearpage
\begin{table*}[p]
\centering
\scriptsize
\setlength{\tabcolsep}{3pt}
\renewcommand{\arraystretch}{1.05}
\caption{\textbf{Host policies and geometry-aware policies.} Rows follow
Section~\ref{sec:related}. Status is the venue as of submission; ``Real''
denotes real-robot experiments; test-time inputs are what the deployed policy
consumes beyond proprioception. The four hosts carry no geometry module.
Every geometry-aware method reports at most three evaluation settings;
\method reports five, on four hosts.}
\label{tab:spatial_vla_comparison}

\begin{tabularx}{\textwidth}{@{}
>{\raggedright\arraybackslash}p{0.10\textwidth}
>{\raggedright\arraybackslash}p{0.075\textwidth}
>{\raggedright\arraybackslash}p{0.17\textwidth}
>{\raggedright\arraybackslash}X
>{\raggedright\arraybackslash}p{0.13\textwidth}
>{\raggedright\arraybackslash}p{0.17\textwidth}@{}}
\toprule
\textbf{Method} & \textbf{Status} & \textbf{Geometry signal} &
\textbf{Route into the model} & \textbf{Test-time inputs} & \textbf{Evaluation} \\
\midrule

\multicolumn{6}{@{}l}{\emph{Hosts: VLAs and WAMs}} \\
\addlinespace[2pt]
$\pi_{0.5}$ & CoRL'25 & None &
VLM trunk co-trained with a flow-matching action expert & RGB & Real \\
X-VLA & ICLR'26 & None &
Embodiment-specific soft prompts into a flow-matching transformer & RGB &
LIBERO; SimplerEnv; CALVIN; VLABench; RoboTwin; Real \\
Fast-WAM & arXiv'26 & None &
Video and action experts in a shared mixture of transformers; no video
generation at inference & RGB & LIBERO; RoboTwin; Real \\
LingBot-VA & RSS'26 & None &
Autoregressive video/action tokens in a mixture of transformers & RGB &
LIBERO; RoboTwin; Real \\

\midrule
\multicolumn{6}{@{}l}{\emph{Spatial action models compared in Section~\ref{sec:experiments}}} \\
\addlinespace[2pt]
SpatialVLA & RSS'25 & Egocentric 3D points from estimated depth &
Position encoding added to the VLM's visual tokens & RGB (depth estimated) &
LIBERO; SimplerEnv; Real \\
GeoVLA & IROS'26 & Point cloud from depth &
Learned point encoder; tokens fed with VLM embeddings to a 3D-enhanced
action expert & RGB-D sensor & LIBERO; ManiSkill2; Real \\
Spatial Forcing & ICLR'26 & Frozen VGGT features (training only) &
Alignment of intermediate visual embeddings; no 3D input at inference & RGB &
LIBERO; RoboTwin; Real \\
GLaD & arXiv'25 & Frozen VGGT features (training only) &
Distillation into the language model's hidden states over visual tokens & RGB &
LIBERO; LIBERO-PRO \\
WAM4D & arXiv'26 & Future depth (training only) &
Spatial register tokens and a depth head, removed for deployment & RGB &
RoboTwin; Real \\
SERF & CoRL'26 & Persistent neural-point map of scene and robot &
Map tokens prepended to $\pi_{0.5}$'s inputs & RGB-D, simulator instance labels, prior map & B1K (3 tasks) \\

\midrule
\multicolumn{6}{@{}l}{\emph{Wider geometry-aware literature}} \\
\addlinespace[2pt]
PointVLA & RA-L'26 & Point-cloud features &
Additive adapters in the action expert & RGB + point cloud & RoboTwin~1.0; Real \\
GeoPredict & CVPR'26 & 3D keypoint tracks and Gaussian geometry (training only) &
Auxiliary predictive supervision & RGB & LIBERO; RoboCasa; Real \\
3D-Mix & arXiv'26 & Frozen VGGT features &
Gated fusion with the VLM's semantic features & RGB & LIBERO; SimplerEnv \\
DyWA & ICCV'25 & Partial and goal point clouds &
Joint action and next-state prediction & Point cloud & Custom simulation; Real \\
GWM & ICCV'25 & Dynamic 3D Gaussians &
Action-conditioned Gaussian rollouts feed a separate policy & RGB &
Meta-World; RoboCasa; Real \\
X-WAM & arXiv'26 & Future depth &
Replicated late diffusion-transformer blocks predict depth & RGB &
RoboCasa; RoboTwin; Real \\
\mbox{MECo-WAM} & arXiv'26 & Frozen VGGT 4D targets (training only) &
4D expert with decayed read-mask attention, removed for deployment & RGB &
LIBERO; RoboTwin; Real \\

\midrule
\textbf{Ours} & --- &
\textbf{Frozen multi-layer features + metric geometry} &
\textbf{Bank-only cross-attention into the late action layers} &
\textbf{RGB, calibration, depth (measured or predicted)} &
\textbf{LIBERO; RoboTwin; RoboPRO; B1K; Real} \\

\bottomrule
\end{tabularx}
\end{table*}

\begin{table*}[p]
\centering
\caption{\textbf{Detailed LIBERO SR (\%).} Results across the four
standard suites. Values for external methods are reported by their respective works; the
$\pi_{0.5}$ base is from the openpi release \cite{openpi2025}. Host averages match Table~\ref{tab:libero-robotwin}; dashes indicate
unavailable per-suite results, not zero success. Blocks group other baselines, existing 3D methods, and our hosts. Green/red:
change from the host's own base.}
\label{tab:libero-details}
\scriptsize
\setlength{\tabcolsep}{0pt}
\settowidth{\gw}{\textbf{99.0}\dpos{6.6}}
\begin{tabular*}{\textwidth}{@{\extracolsep{\fill}}l B@{\extracolsep{0pt}\hspace{0.6pt}$\rightarrow$\hspace{0.6pt}}G@{\extracolsep{\fill}} B@{\extracolsep{0pt}\hspace{0.6pt}$\rightarrow$\hspace{0.6pt}}G@{\extracolsep{\fill}} B@{\extracolsep{0pt}\hspace{0.6pt}$\rightarrow$\hspace{0.6pt}}G@{\extracolsep{\fill}} B@{\extracolsep{0pt}\hspace{0.6pt}$\rightarrow$\hspace{0.6pt}}G@{\extracolsep{\fill}} B@{\extracolsep{0pt}\hspace{0.6pt}$\rightarrow$\hspace{0.6pt}}G@{}}
\toprule
Method & \multicolumn{2}{c}{Spat.} & \multicolumn{2}{c}{Obj.} & \multicolumn{2}{c}{Goal}
& \multicolumn{2}{c}{Long} & \multicolumn{2}{c}{Avg.} \\
\midrule
$\pi_0$ & \multicolumn{2}{c}{96.8} & \multicolumn{2}{c}{98.8} & \multicolumn{2}{c}{95.8} & \multicolumn{2}{c}{85.2} & \multicolumn{2}{c}{94.1} \\
OpenVLA-OFT & \multicolumn{2}{c}{97.6} & \multicolumn{2}{c}{98.4} & \multicolumn{2}{c}{97.9} & \multicolumn{2}{c}{94.5} & \multicolumn{2}{c}{97.1} \\
\midrule
GLaD & \multicolumn{2}{c}{95.0} & \multicolumn{2}{c}{97.4} & \multicolumn{2}{c}{94.4} & \multicolumn{2}{c}{89.4} & \multicolumn{2}{c}{94.1} \\
GeoVLA & \multicolumn{2}{c}{98.4} & \multicolumn{2}{c}{99.0} & \multicolumn{2}{c}{96.6} & \multicolumn{2}{c}{96.6} & \multicolumn{2}{c}{97.7} \\
Spatial Forcing & \multicolumn{2}{c}{\underline{99.4}} & \multicolumn{2}{c}{99.6} & \multicolumn{2}{c}{\underline{98.8}} & \multicolumn{2}{c}{96.0} & \multicolumn{2}{c}{98.5} \\
SpatialVLA & \multicolumn{2}{c}{88.2} & \multicolumn{2}{c}{89.9} & \multicolumn{2}{c}{78.6} & \multicolumn{2}{c}{55.5} & \multicolumn{2}{c}{78.1} \\
\midrule
$\pi_{0.5}$ & 98.8 & \textbf{99.8}\dpos{1.0} & 98.2 & \underline{99.8}\dpos{1.6} & 98.0 & \textbf{99.8}\dpos{1.8} & 92.4 & \textbf{99.0}\dpos{6.6} & 96.9 & \textbf{99.6}\dpos{2.7} \\
X-VLA & 98.2 & \underline{99.4}\dpos{1.2} & 98.6 & 99.6\dpos{1.0} & 97.8 & \underline{98.8}\dpos{1.0} & 97.6 & 97.0\dneg{0.6} & 98.1 & \underline{98.7}\dpos{0.6} \\
Fast-WAM & 98.2 & 98.0\dneg{0.2} & \textbf{100.0} & 98.4\dneg{1.6} & 97.0 & 97.2\dpos{0.2} & 95.2 & 94.0\dneg{1.2} & 97.6 & 96.9\dneg{0.7} \\
LingBot-VA & 98.5 & 98.6\dpos{0.1} & 99.6 & 99.0\dneg{0.6} & 97.2 & 98.2\dpos{1.0} & \underline{98.5} & 93.8\dneg{4.7} & 98.5 & 97.4\dneg{1.1} \\
\bottomrule
\multicolumn{11}{@{}l}{\emph{\method: base $\rightarrow$ graft}} \\
\end{tabular*}
\end{table*}

\begin{table*}[p]
\centering
\scriptsize
\setlength{\tabcolsep}{2pt}
\renewcommand{\arraystretch}{1.0}
\caption{\textbf{Per-task RoboTwin SR (\%), Clean/Randomized.} Each cell
reads Clean/Randomized over exactly 20 seeds per task; every model and
evaluation uses the same seeds. DA3 is the spatial backbone unless marked
VGGT-$\Omega$; Fast-WAM and LingBot-VA without a superscript are the
ungrafted public checkpoints. The average is the mean of the 50 per-task rates.}
\label{tab:robotwin-per-task}
\begin{tabular*}{\linewidth}{@{\extracolsep{\fill}}lccccccc@{}}
\toprule
Task & $\pi_{0.5}^{\mathrm{graft}}$ & X-VLA$^{\mathrm{graft}}$ & Fast-WAM & \shortstack{Fast-\\WAM$^{\mathrm{graft}}$} & LingBot-VA & \shortstack{LingBot-\\VA$^{\mathrm{graft}}$}
& \shortstack{$\pi_{0.5}^{\mathrm{graft}}$\\(VGGT-$\Omega$)} \\
\midrule
Adjust bottle & 100/100 & 100/95 & 100/95 & 100/100 & 100/100 & 95/95 & 100/100 \\
Beat block hammer & 100/100 & 90/85 & 100/100 & 100/95 & 100/100 & 100/100 & 100/95 \\
Blocks ranking RGB & 100/95 & 100/85 & 95/85 & 100/90 & 75/90 & 100/100 & 100/100 \\
Blocks ranking size & 85/70 & 75/85 & 60/65 & 65/85 & 85/50 & 95/90 & 85/75 \\
Click alarmclock & 95/100 & 10/30 & 100/100 & 100/100 & 100/100 & 100/100 & 95/100 \\
Click bell & 100/100 & 70/90 & 100/100 & 100/100 & 100/100 & 100/100 & 95/100 \\
Dump bin bigbin & 95/90 & 95/90 & 95/95 & 95/95 & 90/100 & 85/95 & 80/90 \\
Grab roller & 100/100 & 100/100 & 100/100 & 100/100 & 100/100 & 100/100 & 100/100 \\
Handover block & 90/95 & 100/85 & 60/95 & 95/85 & 80/80 & 100/95 & 95/95 \\
Handover mic & 100/100 & 95/90 & 95/85 & 100/100 & 65/100 & 95/95 & 100/95 \\
Hanging mug & 50/45 & 55/45 & 15/25 & 40/35 & 35/30 & 30/30 & 20/15 \\
Lift pot & 100/100 & 100/100 & 95/95 & 100/90 & 100/100 & 100/100 & 95/90 \\
Move can pot & 100/95 & 100/85 & 80/75 & 95/70 & 85/100 & 90/100 & 100/95 \\
Move pillbottle pad & 100/100 & 95/100 & 100/100 & 100/95 & 95/95 & 100/100 & 100/100 \\
Move playingcard away & 100/100 & 100/100 & 95/100 & 100/100 & 100/95 & 100/100 & 100/100 \\
Move stapler pad & 90/85 & 70/75 & 65/70 & 65/75 & 75/70 & 55/65 & 80/70 \\
Open laptop & 100/100 & 90/95 & 95/100 & 90/100 & 100/100 & 100/95 & 95/95 \\
Open microwave & 70/70 & 55/10 & 55/75 & 80/50 & 90/75 & 85/80 & 75/80 \\
Pick diverse bottles & 95/100 & 50/10 & 85/95 & 100/100 & 80/90 & 90/90 & 95/80 \\
Pick dual bottles & 100/100 & 100/100 & 95/95 & 100/95 & 100/100 & 100/100 & 100/100 \\
Place a2b left & 95/100 & 85/80 & 75/35 & 45/55 & 100/75 & 100/100 & 100/100 \\
Place a2b right & 95/100 & 70/75 & 70/30 & 40/55 & 95/60 & 100/95 & 100/100 \\
Place bread basket & 100/100 & 95/90 & 90/85 & 95/100 & 100/95 & 100/100 & 95/90 \\
Place bread skillet & 80/85 & 90/85 & 85/85 & 95/80 & 100/90 & 90/90 & 80/95 \\
Place burger fries & 95/95 & 100/95 & 95/100 & 90/95 & 90/95 & 100/85 & 100/95 \\
Place can basket & 85/75 & 80/75 & 70/85 & 80/70 & 80/90 & 85/80 & 90/75 \\
Place cans plasticbox & 100/100 & 100/100 & 100/100 & 100/100 & 100/100 & 100/100 & 100/100 \\
Place container plate & 100/100 & 100/100 & 95/90 & 100/100 & 100/100 & 100/95 & 100/100 \\
Place dual shoes & 100/80 & 80/80 & 90/95 & 85/80 & 65/90 & 100/90 & 90/80 \\
Place empty cup & 100/100 & 80/90 & 70/75 & 95/55 & 100/65 & 100/100 & 100/100 \\
Place fan & 100/95 & 75/90 & 90/100 & 80/100 & 90/95 & 100/95 & 100/95 \\
Place mouse pad & 80/95 & 70/75 & 75/80 & 75/80 & 75/75 & 100/85 & 80/70 \\
Place object basket & 85/95 & 90/95 & 55/40 & 50/55 & 85/50 & 90/90 & 85/70 \\
Place object scale & 100/95 & 85/90 & 90/40 & 60/75 & 90/90 & 95/95 & 90/100 \\
Place object stand & 100/100 & 90/100 & 90/75 & 90/90 & 100/85 & 100/95 & 100/100 \\
Place phone stand & 100/90 & 95/85 & 90/90 & 95/90 & 100/90 & 100/95 & 90/90 \\
Place shoe & 100/100 & 100/100 & 85/95 & 95/100 & 90/95 & 100/100 & 95/95 \\
Press stapler & 90/95 & 100/95 & 95/95 & 95/95 & 95/90 & 80/90 & 85/95 \\
Put bottles dustbin & 100/85 & 95/95 & 80/95 & 95/85 & 85/100 & 90/75 & 100/90 \\
Put object cabinet & 80/75 & 75/60 & 60/0 & 55/25 & 85/60 & 85/80 & 95/70 \\
Rotate QR code & 90/100 & 100/90 & 95/90 & 90/90 & 85/100 & 95/90 & 100/90 \\
Scan object & 80/85 & 90/80 & 5/10 & 40/0 & 70/5 & 100/95 & 90/80 \\
Shake bottle & 100/100 & 100/100 & 100/100 & 100/100 & 100/100 & 100/100 & 100/100 \\
Shake bottle horizontally & 100/100 & 100/100 & 100/100 & 100/100 & 100/100 & 100/100 & 100/100 \\
Stack blocks three & 95/90 & 25/35 & 100/100 & 80/100 & 90/100 & 100/80 & 55/75 \\
Stack blocks two & 100/85 & 95/85 & 100/100 & 100/95 & 100/100 & 100/100 & 95/95 \\
Stack bowls three & 95/90 & 65/75 & 70/95 & 85/85 & 60/85 & 90/90 & 80/70 \\
Stack bowls two & 100/100 & 85/95 & 95/100 & 95/95 & 100/95 & 100/100 & 100/100 \\
Stamp seal & 100/85 & 65/70 & 80/95 & 100/80 & 80/80 & 90/90 & 90/85 \\
Turn switch & 85/80 & 55/90 & 50/60 & 70/75 & 80/55 & 55/60 & 80/75 \\
\midrule
Average & 94.0/92.4 & 83.7/82.6 & 82.6/81.8 & 86.0/83.3 & 88.9/85.7 & 93.3/91.4 & 91.5/89.1 \\
\bottomrule
\end{tabular*}
\end{table*}

\begin{table*}[p]
\centering
\scriptsize
\renewcommand{\arraystretch}{0.74}
\setlength{\tabcolsep}{9pt}
\caption{\textbf{Per-task RoboPRO success (\%), Easy/Hard.} Each cell reads
Easy (SR)/Hard (HSR) over exactly 20 seeds per task for all 80 tasks in each
configuration, the same seeds for every model; averages are the mean of the 80
per-task rates and match
Table~\ref{tab:robopro}.}
\label{tab:robopro-per-task}
\begin{tabular}{@{}lcccc@{}}
\toprule
 & \multicolumn{2}{c}{$\pi_{0.5}^{\mathrm{graft}}$} & \multicolumn{2}{c}{X-VLA$^{\mathrm{graft}}$} \\
\cmidrule(lr){2-3}\cmidrule(l){4-5}
Task & Clean & Clutter & Clean & Clutter \\
\midrule
Chain apple bin bowl rack spoon sink & 15/15 & 10/10 & 0/0 & 0/0 \\
Chain apple sink plate bread board & 95/85 & 100/90 & 0/0 & 0/0 \\
Chain bowl rack apple sink & 0/0 & 0/0 & 0/0 & 0/0 \\
Chain heat hamburger & 45/45 & 85/75 & 20/20 & 0/0 \\
Chain serve hamburger & 45/45 & 0/0 & 0/0 & 0/0 \\
Close drawer & 100/100 & 100/100 & 0/0 & 0/0 \\
Close microwave & 100/100 & 100/40 & 60/60 & 35/35 \\
Drop apple in bin & 100/95 & 80/80 & 80/20 & 55/0 \\
Empty box & 95/95 & 100/95 & 80/80 & 30/30 \\
Move book onto table & 95/85 & 100/70 & 100/100 & 95/95 \\
Move bottle & 45/45 & 65/55 & 40/40 & 75/70 \\
Move bottle from fridge next to can & 100/100 & 80/80 & 60/0 & 55/0 \\
Move can from cabinet to basket & 60/60 & 65/60 & 80/80 & 5/5 \\
Move cup & 70/70 & 35/25 & 20/20 & 40/40 \\
Move cup next to book & 100/100 & 80/80 & 100/100 & 45/15 \\
Move cup onto table & 95/95 & 60/45 & 60/60 & 75/75 \\
Move cup put pen in cup & 70/70 & 35/10 & 100/100 & 20/15 \\
Move cups into box & 95/95 & 70/40 & 60/60 & 20/20 \\
Move hamburger onto plate & 100/100 & 90/90 & 0/0 & 40/40 \\
Move items around & 20/20 & 0/0 & 0/0 & 0/0 \\
Move milk close fridge & 90/90 & 75/45 & 40/0 & 0/0 \\
Move pen to box & 95/95 & 60/0 & 80/80 & 90/0 \\
Move seal cup next to box & 90/90 & 100/90 & 80/80 & 10/10 \\
Move seal next to box & 100/100 & 100/100 & 100/100 & 95/95 \\
Move seal next to pencup & 90/90 & 90/90 & 100/100 & 10/10 \\
Move seal onto book & 95/95 & 100/75 & 100/100 & 15/15 \\
Move seal onto table & 100/100 & 100/40 & 100/100 & 90/90 \\
Open drawer & 100/100 & 100/95 & 60/60 & 60/60 \\
Organize table & 20/20 & 5/5 & 0/0 & 0/0 \\
Pick apple from bowl & 100/10 & 90/35 & 40/0 & 15/0 \\
Pick apple from sink & 95/95 & 100/80 & 0/0 & 0/0 \\
Pick bottle from fridge & 100/100 & 95/35 & 100/100 & 100/100 \\
Pick boxdrink from basket & 5/5 & 0/0 & 100/60 & 45/0 \\
Pick can from basket & 90/5 & 100/0 & 100/0 & 95/45 \\
Pick can from cabinet & 100/100 & 100/70 & 60/40 & 80/80 \\
Pick fork from sink & 100/100 & 100/95 & 0/0 & 0/0 \\
Pick hamburger from microwave & 50/50 & 45/15 & 0/0 & 25/25 \\
Pick milk box from fridge & 100/100 & 95/90 & 100/100 & 90/90 \\
Pick sauce can from cabinet & 100/100 & 100/5 & 100/100 & 100/100 \\
Place bowl in dishrack & 85/85 & 100/100 & 100/100 & 30/30 \\
Put book in fileholder & 50/50 & 5/5 & 100/60 & 30/15 \\
Put book on book & 85/85 & 95/95 & 40/20 & 60/60 \\
Put bottle in basket & 100/100 & 50/25 & 80/80 & 90/90 \\
Put bottle in fridge & 90/90 & 30/0 & 100/100 & 95/95 \\
Put bowl in sink & 0/0 & 0/0 & 60/60 & 10/10 \\
Put bread on board & 100/95 & 85/60 & 80/0 & 60/0 \\
Put can close cabinet & 95/90 & 100/20 & 80/80 & 0/0 \\
Put can in cabinet & 85/85 & 100/10 & 80/60 & 95/95 \\
Put can infront of microwave & 85/85 & 100/70 & 100/100 & 95/95 \\
Put can next to basket & 40/25 & 95/95 & 100/100 & 65/65 \\
Put cup in box & 100/100 & 90/90 & 100/100 & 95/95 \\
Put cup on coaster & 70/70 & 0/0 & 100/100 & 80/80 \\
Put cup on table & 80/80 & 65/60 & 80/80 & 45/45 \\
Put glue in box & 100/50 & 95/50 & 100/100 & 80/70 \\
Put hamburger in microwave & 100/95 & 70/55 & 0/0 & 0/0 \\
Put milk box in fridge & 90/90 & 90/40 & 100/100 & 90/90 \\
Put milktea next to laptop & 65/65 & 65/40 & 80/60 & 60/30 \\
Put milktea on shelf & 65/65 & 40/25 & 80/80 & 75/40 \\
Put mouse next to stapler & 90/90 & 95/80 & 100/0 & 70/0 \\
Put mouse on pad & 85/85 & 30/30 & 80/80 & 45/5 \\
Put pen in box & 100/100 & 95/50 & 100/100 & 35/35 \\
Put pen in pencup & 20/15 & 45/20 & 0/0 & 0/0 \\
Put phone next to cube & 90/90 & 100/50 & 60/0 & 45/30 \\
Put phone on holder & 70/70 & 70/65 & 60/60 & 55/55 \\
Put plate in sink & 70/70 & 50/50 & 60/50 & 45/35 \\
Put rubikscube in drawer & 95/95 & 90/85 & 80/80 & 50/45 \\
Put rubikscube next to milktea & 60/50 & 20/20 & 80/20 & 30/0 \\
Put sauce can in basket & 95/75 & 100/65 & 100/100 & 95/55 \\
Put sauce can in cabinet & 100/100 & 90/5 & 100/100 & 85/70 \\
Put seal in box & 100/100 & 100/100 & 100/100 & 100/100 \\
Put spoon in dishrack & 90/90 & 95/95 & 20/20 & 30/30 \\
Put spoon in sink & 100/100 & 100/100 & 0/0 & 35/35 \\
Put spoon on plate & 50/50 & 50/50 & 40/0 & 70/0 \\
Put stapler in drawer & 95/95 & 95/50 & 50/40 & 80/80 \\
Put stapler next to mouse & 90/85 & 20/0 & 0/0 & 10/0 \\
Put stapler on book & 100/95 & 100/100 & 100/100 & 80/80 \\
Set up table & 40/35 & 5/0 & 0/0 & 0/0 \\
Store rubikscube on shelf & 75/75 & 60/40 & 0/0 & 0/0 \\
Store stapler in drawer & 60/60 & 55/35 & 0/0 & 0/0 \\
Switch can with bottle in basket & 75/0 & 35/10 & 60/0 & 0/0 \\
\midrule
Average & 78.8/73.7 & 69.8/49.4 & 60.9/49.9 & 45.3/35.3 \\
\bottomrule
\end{tabular}
\end{table*}
\clearpage
\end{document}